\documentclass[twoside,11pt]{article}
\usepackage[preprint]{jmlr2e}
\usepackage[utf8]{inputenc}
\usepackage{url}
\usepackage{booktabs}
\usepackage{amsmath,amssymb}
\usepackage{graphicx}
\usepackage{lastpage}
\usepackage{enumitem}
\usepackage{microtype}
\hypersetup{
  hidelinks,
  pdfauthor={Michael Timothy Bennett},
  pdftitle={Are Flat Minima an Illusion?}
}

\newcommand{\Ext}[1]{\mathrm{Ext}\!\left(#1\right)}
\newcommand{\R}{\mathbb{R}}
\newcommand{\1}{\mathbf 1}
\newcommand{\bkl}{\operatorname{kl}}
\renewenvironment{proof}[1][Proof]{\par\noindent\textbf{#1.}\ }{\hfill\BlackBox\par\vspace{2mm}}
\DeclareMathOperator*{\argmax}{arg\,max}

\newif\ifjready
\jreadyfalse
\newcommand{\mjr}{N/A}\newcommand{\mjci}{N/A}\newcommand{\mjq}{N/A}
\newcommand{\mhr}{N/A}\newcommand{\mhci}{N/A}\newcommand{\mhq}{N/A}
\newcommand{\mfr}{N/A}\newcommand{\mfci}{N/A}\newcommand{\mfq}{N/A}
\newcommand{\mmr}{N/A}\newcommand{\mmci}{N/A}\newcommand{\mmq}{N/A}
\newcommand{\mwr}{N/A}\newcommand{\mwci}{N/A}\newcommand{\mwq}{N/A}
\newcommand{\fjr}{N/A}\newcommand{\fjci}{N/A}\newcommand{\fjq}{N/A}
\newcommand{\fhr}{N/A}\newcommand{\fhci}{N/A}\newcommand{\fhq}{N/A}
\newcommand{\ffr}{N/A}\newcommand{\ffci}{N/A}\newcommand{\ffq}{N/A}
\newcommand{\fmr}{N/A}\newcommand{\fmci}{N/A}\newcommand{\fmq}{N/A}
\newcommand{\fwr}{N/A}\newcommand{\fwci}{N/A}\newcommand{\fwq}{N/A}
\newcommand{\mrel}{N/A}\newcommand{\frel}{N/A}
\newcommand{\rrho}{N/A}\newcommand{\rci}{N/A}\newcommand{\rrq}{N/A}
\newcommand{\jmnmean}{N/A}\newcommand{\jrdmean}{N/A}\newcommand{\rdiff}{N/A}\newcommand{\rdq}{N/A}
\jreadytrue
\renewcommand{\mjr}{+0.290}
\renewcommand{\mjci}{[$+0.094$, $+0.468$]}
\renewcommand{\mjq}{0.004}
\renewcommand{\mhr}{+0.091}
\renewcommand{\mhci}{[$-0.123$, $+0.298$]}
\renewcommand{\mhq}{1.000}
\renewcommand{\mfr}{+0.168}
\renewcommand{\mfci}{[$-0.032$, $+0.361$]}
\renewcommand{\mfq}{0.389}
\renewcommand{\mmr}{+0.504}
\renewcommand{\mmci}{[$+0.329$, $+0.652$]}
\renewcommand{\mmq}{1.60\times 10^{-4}}
\renewcommand{\mwr}{+0.398}
\renewcommand{\mwci}{[$+0.217$, $+0.565$]}
\renewcommand{\mwq}{2.40\times 10^{-4}}
\renewcommand{\fjr}{+0.468}
\renewcommand{\fjci}{[$+0.301$, $+0.607$]}
\renewcommand{\fjq}{4.00\times 10^{-5}}
\renewcommand{\fhr}{+0.056}
\renewcommand{\fhci}{[$-0.142$, $+0.252$]}
\renewcommand{\fhq}{1.000}
\renewcommand{\ffr}{-0.039}
\renewcommand{\ffci}{[$-0.235$, $+0.164$]}
\renewcommand{\ffq}{1.000}
\renewcommand{\fmr}{+0.546}
\renewcommand{\fmci}{[$+0.381$, $+0.680$]}
\renewcommand{\fmq}{1.60\times 10^{-4}}
\renewcommand{\fwr}{+0.361}
\renewcommand{\fwci}{[$+0.174$, $+0.524$]}
\renewcommand{\fwq}{1.40\times 10^{-3}}
\renewcommand{\mrel}{0.940}
\renewcommand{\frel}{0.956}
\renewcommand{\rrho}{+0.058}
\renewcommand{\rci}{[$-0.146$, $+0.263$]}
\renewcommand{\rrq}{0.567}
\renewcommand{\jmnmean}{0.287}
\renewcommand{\jrdmean}{0.420}
\renewcommand{\rdiff}{-0.134}
\renewcommand{\rdq}{2.00\times 10^{-5}}
\newcommand{\mpemp}{0.434}
\newcommand{\mplb}{0.241}
\newcommand{\mpseed}{68}
\newcommand{\fpemp}{0.467}
\newcommand{\fplb}{0.269}
\newcommand{\fpseed}{19}
\newcommand{\pacrhs}{0.0892}

\jmlrheading{27}{2026}{1--\pageref{LastPage}}{Submitted 7/26}{}{M000}{Michael Timothy Bennett}
\ShortHeadings{Weakness and Neural Generalisation}{Bennett}
\firstpageno{1}

\title{Are Flat Minima an Illusion?}
\author{\name Michael Timothy Bennett \email m@michaeltimothybennett.com\\
\addr School of Computing, The Australian National University, Canberra, ACT 2601, Australia}
\editor{To be assigned}
\begin{document}
\maketitle

\begin{abstract}
Flat minima are an account of why deep networks generalise. However flatness is a matter of form (parameters), while generalisation is of function. The same function can be a result of many different parameterisations. I demonstrate this by rescaling ReLU networks, changing raw Hessian trace by up to $99$ times while every prediction remains fixed. Raw curvature cannot identify a function-level explanation. Previous theoretical work traced generalisation to the weakness of constraints implied by function, meaning the freedom a model retains within the bounds of what it has learned to be correct. A policy is weaker when more future commitments remain compatible with what it has learned, allowing more freedom to adapt. To measure this for neural networks, I freeze the last hidden representation and ask whether each of 512 sampled label bundles can be met by a replacement affine classifier. The resulting joint completion score is invariant under invertible linear mixing and translation of feature coordinates. Across two predeclared cohorts of 100 networks, it predicts held-out accuracy with rank correlations $0.29$ and $0.47$. Raw Hessian trace and relative flatness have no multiplicity-corrected association. To put it provocatively, freedom is correlated with adaptability, while flatness is a matter of description.
\end{abstract}
\begin{keywords}
generalisation, PAC-Bayes, flat minima, weakness, neural networks
\end{keywords}

\section{Introduction}\label{sec:intro}

Networks in low-curvature parameter regions often generalise better than networks in high-curvature regions \citep{hochreiter1997flat,keskar2017}. Curvature is measured from the Hessian, the matrix of second derivatives of training loss with respect to model parameters. Its trace sums curvature along the chosen parameter coordinates. Sharpness-Aware Minimisation trains against losses in a small parameter neighbourhood and often improves accuracy \citep{foret2021sharpnessaware}. These findings establish an engineering effect. The function-level explanation remains open.

For networks whose activations scale linearly under positive rescaling, including rectified linear unit (ReLU) networks, a function-preserving layer rescaling changes raw Hessian geometry while every prediction stays fixed \citep{dinh2017sharp}. Gradient-based training may favour a restricted part of each rescaling orbit, so fixed-coordinate curvature can record how a function was reached. Reparameterisation-adjusted measures were developed for this reason \citep{tsuzuku2020normalized,petzka2021relative,kwon2021asam}. The target here is the function-level question. One function admits many raw traces. Raw trace therefore characterises one parameter vector among many that represent the same function.

Stack Theory proposes another object. An embodiment is the vocabulary of constraints a system can physically express and the language generated by that vocabulary. A task $\alpha$ has a policy version space $\Pi_\alpha$ containing its correct policies \citep{mitchell1982}. Weakness assigns each member a score equal to the embodied commitments that remain compatible after correctness has been imposed \citep{bennett2023b,bennett2024b,bennett2025d,bennett2025thesis}. Within one embodiment, Bagi is the rule that chooses a weakest member of $\Pi_\alpha$. Across embodiments, Unagi is the rule that compares distinct unseen external facts expressed by the languages \citep{bennett2026wrong}.

This paper uses \emph{free-maxing}, the current name for the earlier \emph{w-maxing} meta-approach. Section~\ref{sec:related} explains the rename.

This paper applies those definitions to neural networks. The finite weakest-correct theorem and its continuous extension are prior Stack Theory results \citep{bennett2023b,bennett2025d,bennett2025thesis,bennett2026wrong}. The new contributions are as follows.

\begin{enumerate}[leftmargin=*]
\item A PAC-Bayes bound that combines observed future-demand survival with prior-to-posterior information cost, together with an identity for the minimum information cost of restricting an unseen-region measure to one policy buffer.
\item A neural construction that treats class assignments as constraints on a frozen feature representation, together with a fixed joint-bank score and a theorem proving invariance under reversible linear mixing and translation of features.
\item A task-alignment theorem giving the exact anchor-agreement condition for one common nonempty task.
\item Three predeclared 100-network cohorts whose training-side measurements were sealed before test labels were loaded. Comparators are raw Hessian trace, scale-adjusted relative flatness, a class-separation score divided by layer spectral norms, and weight norm.
\item A rescaling experiment in which raw Hessian trace changes by up to $99\times$ while every prediction remains fixed.
\end{enumerate}

The neural construction and completed cohort results are first presented here. The finite weakest-correct theorem remains prior background. A two-page IJCAI summary reported the finite theory and earlier binary-arithmetic experiments \citep{bennett2025e}.

The neural evidence has a declared scope. The joint bank score correlates with held-out accuracy in both natural-label cohorts. Spectral-product margin has larger point estimates, and smaller weight norm also predicts accuracy. Random-label representations leave more of the uniform bank feasible while their score-accuracy association centres near zero. These findings delimit the estimator. The Stack Theory theorem retains its scope over compatible future mass under a fixed task, embodied language, and demand law.

\section{Stack Theory}\label{sec:theory}

The paper uses the definitions in the Stack Theory book and technical appendix \citep{bennett2025thesis}. A program here means a physically embodied binary constraint.

\begin{definition}[Embodied language and weakness]\label{def:lang}
An environment is a nonempty set $\Phi$ of mutually exclusive states. A program is any set $p\subseteq\Phi$, and $\mathcal P:=2^\Phi$ is the set of all programs. A vocabulary $\mathfrak v\subseteq\mathcal P$ induces the embodied language
\[
L_{\mathfrak v}
=
\left\{l\subseteq\mathfrak v\;\middle|\;l\text{ is finite and }\bigcap_{p\in l}p\neq\emptyset\right\}.
\]
Its elements are statements. Their truth sets are $T(l):=\bigcap_{p\in l}p$, with $T(\emptyset):=\Phi$. A completion of $l$ is any $y\in L_{\mathfrak v}$ with $l\subseteq y$, and
\[
\Ext{l}:=\{y\in L_{\mathfrak v}:l\subseteq y\}.
\]
For $X\subseteq L_{\mathfrak v}$, write $\Ext{X}:=\bigcup_{x\in X}\Ext{x}$. In a finite language, the weakness of $l$ is
\[
w(l):=|\Ext{l}|.
\]
\end{definition}

A statement is a satisfiable bundle of commitments one body can instantiate together. Weakness counts the further commitments which remain compatible. Candidate-policy count and description length are separate quantities.

\begin{definition}[Tasks, policies, and the version space]\label{def:task}
A $\mathfrak v$-task is a pair $\alpha=\langle I_\alpha,O_\alpha\rangle$ with
\[
I_\alpha\subseteq L_{\mathfrak v},
\qquad
O_\alpha\subseteq\Ext{I_\alpha}.
\]
A policy is a statement $\pi\in L_{\mathfrak v}$. It is correct for $\alpha$ when
\[
\Ext{I_\alpha}\cap\Ext{\pi}=O_\alpha.
\]
The correct-policy version space is
\[
\Pi_\alpha
=
\left\{\pi\in L_{\mathfrak v}:
\Ext{I_\alpha}\cap\Ext{\pi}=O_\alpha
\right\}.
\]
When $\alpha$ is the observed task from which a learner must generalise, it is called the child task and $\Pi_\alpha$ is its child-correct policy version space.
\end{definition}

Bagi selects
\[
\pi^\star\in\argmax_{\pi\in\Pi_\alpha}w(\pi).
\]
Weakness assigns one score to each member of $\Pi_\alpha$. The quantity $|\Pi_\alpha|$ counts the whole version space.

\begin{definition}[Measured weakness and future demands]\label{def:meas}
A measured embodied language is $(\mathfrak v,\mathcal A_L,\mu)$. Here $\mathcal A_L$ is a $\sigma$-algebra, meaning a collection of subsets of $L_{\mathfrak v}$ closed under complements and countable unions. Its members are the measurable regions, and $\mu$ assigns their sizes. Every extension used below belongs to $\mathcal A_L$. The measure is $\sigma$-finite, so $L_{\mathfrak v}$ can be covered by countably many measurable regions of finite measure. Define
\[
w_\mu(l):=\mu(\Ext{l}).
\]
For a task $\alpha$, define its unseen region by
\[
U:=L_{\mathfrak v}\setminus\Ext{I_\alpha}.
\]
Fix a correct policy $\pi\in\Pi_\alpha$. Its buffer is
\[
B_\pi:=\Ext{\pi}\cap U.
\]
Assume $U$ is measurable and $0<\mu(U)<\infty$. Let $S\subseteq U$ be a random measurable set of future demands with probability law $P$. Every containment event used below is assumed measurable. Generalisation probability is
\[
G(\pi,P):=P(S\subseteq B_\pi).
\]
A demand law is $\mu$-exchangeable when policies with equally measured buffers have equal survival probability and larger buffer measure never lowers that probability.
\end{definition}

For every correct policy,
\[
\Ext{\pi}=O_\alpha\sqcup B_\pi,
\]
where $\sqcup$ denotes a disjoint union.
When $\mu(O_\alpha)<\infty$, $\mu$-weakness and buffer measure rank correct policies identically.

\begin{theorem}[Weakest-correct optimality]\label{thm:dom}
Fix a measured embodied language and a task $\alpha$ with $\mu(O_\alpha)<\infty$. Under any $\mu$-exchangeable future-demand law,
\[
\mu(B_{\pi_1})\geq\mu(B_{\pi_2})
\quad\Longrightarrow\quad
G(\pi_1,P)\geq G(\pi_2,P)
\]
for all $\pi_1,\pi_2\in\Pi_\alpha$. If generalisation probability is strictly increasing across the buffer measures attained by $\Pi_\alpha$, strict inequality of buffer measure implies strict inequality of generalisation probability.
\end{theorem}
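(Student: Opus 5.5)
The proof is essentially an unpacking of the definition of a $\mu$-exchangeable demand law, and I would carry it out in three short steps.

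\textbf{Step 1: survival depends only on buffer measure.} I would first make precise that, under a $\mu$-exchangeable law $P$, $G(\pi,P)$ depends on $\pi$ only through $\mu(B_\pi)$. Since every $B_\pi$ lies in $U$ and $\mu(U)<\infty$, this gives a function $g:[0,\mu(U)]\to[0,1]$ with $G(\pi,P)=g(\mu(B_\pi))$ for all $\pi\in\Pi_\alpha$. Two facts are needed here.
\begin{itemize}
\item Well-definedness: the clause ``equally measured buffers give equal survival probability'' says exactly that $g$ is well defined on the set $M:=\{\mu(B_\pi):\pi\in\Pi_\alpha\}$.
\item Monotonicity: the clause ``larger buffer measure never lowers survival probability'' says that $g$ is nondecreasing on $M$.
\end{itemize}
The only care needed is measurability. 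The events $\{S\subseteq B_\pi\}$ are measurable by the standing assumption of Definition~\ref{def:meas}, and $B_\pi=\Ext{\pi}\cap U$ is measurable because extensions belong to $\mathcal A_L$ and $U$ is measurable.

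\textbf{Step 2: the main implication.} Given $\pi_1,\pi_2\in\Pi_\alpha$ with $\mu(B_{\pi_1})\ge\mu(B_{\pi_2})$, monotonicity of $g$ gives
\[
G(\pi_1,P)=g(\mu(B_{\pi_1}))\ge g(\mu(B_{\pi_2}))=G(\pi_2,P).
\]
This is the first claim.

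\textbf{Step 3: the strict version and the link to weakness.} If $g$ is strictly increasing on $M$, the same chain with strict inequality yields the strict statement. To connect with Bagi, I would record the decomposition $\Ext{\pi}=O_\alpha\sqcup B_\pi$, which follows from correctness:
\[
\Ext{\pi}\cap\Ext{I_\alpha}=O_\alpha,
\qquad
\Ext{\pi}\setminus\Ext{I_\alpha}=\Ext{\pi}\cap U=B_\pi.
\]
By additivity, $w_\mu(\pi)=\mu(O_\alpha)+\mu(B_\pi)$. Because $\mu(O_\alpha)<\infty$, subtracting this common finite constant is legitimate, so $w_\mu$ and buffer measure order $\Pi_\alpha$ identically. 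Hence a weakest correct policy maximises $G$.

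\textbf{Main obstacle.} The hardest point is not a computation but interpreting ``$\mu$-exchangeable'' rigorously enough to extract the single function $g$. The conclusion is nearly tautological once the definition is read as asserting the existence of a nondecreasing $g$ on attained buffer measures. If instead one wanted exchangeability defined via invariance of $P$ under $\mu$-preserving transformations of $U$, deriving monotonicity would require constructing, for $\mu(B_1)\ge\mu(B_2)$, a measure-preserving map sending $B_2$ into $B_1$. That step would need nonatomicity or a standard Borel structure, and it is the part I would flag. I would take the definition as stated in Definition~\ref{def:meas}, which avoids that construction.
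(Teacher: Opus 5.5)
Your proposal is correct and matches the paper's proof. Both read the main implication directly off the definition of $\mu$-exchangeability (survival is a nondecreasing function of buffer measure), obtain the strict case from strict increase on attained buffer measures, and use the decomposition $\Ext{\pi}=O_\alpha\sqcup B_\pi$ with $\mu(O_\alpha)<\infty$ to tie buffer measure to $w_\mu$. The obstacle you flag for a symmetry-based reading of exchangeability is what the paper handles separately: it derives monotonicity from equal-measure symmetry using atomlessness of the unseen region.
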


This is the continuous Stack Theory rule \citep{bennett2025thesis,bennett2026wrong}. It is universal over declared embodiments. For every fixed $(\mathfrak v,\mu)$ and matching demand law, select a correct policy that maximises $w_\mu$. Numerical comparisons use matched vocabularies and measures.

Equal-measure symmetry is a sufficient premise. It requires $P(S\subseteq A)=P(S\subseteq B)$ whenever measurable regions $A,B\subseteq U$ have $\mu(A)=\mu(B)$. An atomless region can be divided to obtain a measurable subset of any smaller measure. On such an unseen region, corpus $\mu$-exchangeability follows from this symmetry and event containment. The derivation is given by the argument in Appendix~\ref{app:proofs}.

For finite $U=\{u_1,\ldots,u_m\}$, suppose each unseen requirement $u_i$ becomes relevant independently with probability $0\leq r_i<1$. Then
\[
G(\pi,P)=\prod_{u_i\notin B_\pi}(1-r_i).
\]
Maximising $G(\pi,P)$ is equivalent to maximising
\[
\sum_{u_i\in B_\pi}-\log(1-r_i).
\]
Equal positive rates recover ordinary weakness up to a positive factor. Known demand structure changes the weights, while compatible future mass remains the target.

Bagi operates inside one embodied language. Semantic Unagi compares different embodied languages by matching external truth conditions that mean the same thing in each \citep{bennett2026wrong}. The neural protocol below uses $J_H$ until its network policies have been shown to belong to one common policy version space.

\section{Raw Curvature Varies Within One Function}\label{sec:curvature}

Fix a neural-network architecture. Let $\Theta$ be its parameter space, meaning the set of all allowed weight and bias values, and let $\theta\in\Theta$ denote one parameter vector. Write $F(\theta)$ for the input-output function represented by $\theta$. Treat each representable function as one state and define the environment
\[
\Phi:=\{F(\theta):\theta\in\Theta\}.
\]
Thus $F:\Theta\to\Phi$. Fix a vocabulary $\mathfrak v\subseteq2^\Phi$, so each $p\in\mathfrak v$ is a yes-or-no property of represented functions. The Stack Theory encoding map is
\[
\operatorname{Enc}_{\mathfrak v}(F(\theta))
:=
\{p\in\mathfrak v:F(\theta)\in p\}.
\]
This statement records every property in $\mathfrak v$ that is true of the represented function. Parameter vectors representing the same function therefore receive the same encoding.

A function-preserving reparameterisation is a smooth, reversible change of parameter coordinates $\psi:\Theta\to\Theta$ that satisfies $F(\psi(\theta))=F(\theta)$.

\begin{proposition}[Function-level invariance]\label{prop:funcinv}
If $\psi$ is function preserving, then
\[
\operatorname{Enc}_{\mathfrak v}(F(\psi(\theta)))
=
\operatorname{Enc}_{\mathfrak v}(F(\theta)).
\]
The encoded statement is identical. Its extension and finite weakness are therefore unchanged. Under a fixed measured language, its $\mu$-weakness is unchanged. When this statement is a policy for the same task and demand law, its buffer and $G$ are unchanged.
\end{proposition}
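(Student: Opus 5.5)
The argument is a chain of substitutions: each quantity named in Proposition~\ref{prop:funcinv} is a function of the encoded statement alone. Once the first equality is established, every later claim follows by applying a fixed map to equal arguments.

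First, the definition of function preservation gives $F(\psi(\theta))=F(\theta)$ as elements of $\Phi$. The encoding map $\operatorname{Enc}_{\mathfrak v}$ is defined on $\Phi$, not on $\Theta$. Applying it to equal states gives
\[
\operatorname{Enc}_{\mathfrak v}(F(\psi(\theta)))=\{p\in\mathfrak v:F(\psi(\theta))\in p\}=\{p\in\mathfrak v:F(\theta)\in p\}=\operatorname{Enc}_{\mathfrak v}(F(\theta)).
\]
Smoothness and invertibility of $\psi$ are not used; only the functional identity matters. I would note this explicitly so that it is clear raw Hessian quantities, which do depend on $\theta$ through more than $F(\theta)$, fall outside the argument.

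Second, let $l$ denote this common statement. Under Definition~\ref{def:lang}, $\Ext{l}=\{y\in L_{\mathfrak v}:l\subseteq y\}$ depends only on $l$ and the fixed language $L_{\mathfrak v}$. So the extensions agree, and in the finite case $w(l)=|\Ext{l}|$ agrees. Under a fixed measured language $(\mathfrak v,\mathcal A_L,\mu)$ from Definition~\ref{def:meas}, $w_\mu(l)=\mu(\Ext{l})$ is $\mu$ evaluated on the same measurable set, so it is unchanged.

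Third, fix the task $\alpha$ and demand law $P$. The unseen region $U=L_{\mathfrak v}\setminus\Ext{I_\alpha}$ does not depend on $\theta$. Hence $B_l=\Ext{l}\cap U$ is the same set for both parameter vectors. Correctness, $\Ext{I_\alpha}\cap\Ext{l}=O_\alpha$, is also a property of $l$ alone, so membership in $\Pi_\alpha$ is preserved. Finally, $G(l,P)=P(S\subseteq B_l)$ is the same event under the same law.

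The only real obstacle is conceptual. One must make sure no quantity secretly depends on $\theta$ rather than on $F(\theta)$, and that the language, measure, task and law are held fixed across the reparameterisation. I would state this standing assumption at the outset so that each step is literally the same function applied to equal inputs.
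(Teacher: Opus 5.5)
Your proof is correct and follows the paper's reasoning. The paper gives no separate appendix proof and relies on the preceding remark that parameter vectors representing the same function receive the same encoding, after which extension, weakness, $\mu$-weakness, buffer and $G$ are all functions of that single statement, with the language, measure, task and demand law held fixed.
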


For a scalar preactivation $z\in\R$, a ReLU activation obeys $\operatorname{ReLU}(\beta z)=\beta\operatorname{ReLU}(z)$ for $\beta>0$. For two consecutive ReLU layers, multiply one layer and its bias by $\beta$ and divide the next weight matrix by $\beta$. This positive-homogeneity identity preserves every output. Raw Hessian trace changes under this coordinate map \citep{dinh2017sharp}. In the retained experiment, the trace changes by up to $99\times$ while every test prediction remains fixed.

These observations establish coordinate variability within one learned function. A rescaling orbit is the family of parameter vectors connected by function-preserving rescalings. Gradient flow, a continuous-time idealisation of gradient descent, can constrain which members of that family training visits \citep{du2018balanced}. Fixed-coordinate curvature can then describe an optimiser-conditioned path. Adjusted measures can also remain unchanged under specified classes of coordinate transformation \citep{tsuzuku2020normalized,petzka2021relative,kwon2021asam}. The experiments therefore include relative flatness alongside raw trace.

\section{Joint Neural Completion}\label{sec:joint}

A feature vector is the finite list of numbers produced for one input by a hidden network layer. A feature-classifier vocabulary describes which labels can be assigned after a network converts inputs into these vectors. Each vocabulary item is one yes-or-no claim. It asks whether some final affine classifier can assign class $c$ to input $x$. An affine classifier computes one score per class by multiplying a feature vector by a weight matrix and adding a bias. A neural construction is a mathematical specification of inputs, feature vectors, allowed classifiers, and class claims. The construction below creates such a vocabulary for each trained network. It freezes output from the layer immediately before its final classifier and allows every affine replacement classifier. The symbol $s^{(f)}$ translates an external probe-label demand into network $f$'s vocabulary. The statistic $J_H$ records how often sampled joint demands remain feasible with that network's empirical policy.

\subsection{Feature-classifier vocabulary}

Fix a trained network $f$ and a finite evaluation set $X$. Let $d\geq1$ be the feature dimension. Network $f$ maps each input $x\in X$ to a feature vector $\varphi_f(x)\in\R^d$. Let $K\geq2$ be the number of classes and let $\mathcal C=\{0,\ldots,K-1\}$ be the class set. A replacement affine head has a weight matrix $W\in\R^{K\times d}$ and a bias vector $b\in\R^K$. It assigns one score $W_c\!\cdot\!\varphi_f(x)+b_c$ to each class $c$. Take the environment to be the set of all such replacement heads, written $\Phi:=\R^{Kd+K}$.
For each $x\in X$ and $c\in\mathcal C$, define
\[
p^{\varphi_f}_{x,c}
:=
\left\{(W,b)\in\Phi:
W_c\!\cdot\!\varphi_f(x)+b_c>
W_j\!\cdot\!\varphi_f(x)+b_j
\text{ for every }j\in\mathcal C\setminus\{c\}
\right\}.
\]
The vocabulary is
\[
\mathfrak v_{\varphi_f}
:=
\{p^{\varphi_f}_{x,c}:x\in X,\ c\in\mathcal C\}.
\]
A bundle belongs to $L_{\mathfrak v_{\varphi_f}}$ exactly when one affine head satisfies all class inequalities at once. Checking this means asking whether a finite system of linear inequalities has a solution. Standard linear-programming software performs that check. I abbreviate this as an LP feasibility check.

Partition $X$ into disjoint child, anchor, and probe sets. Child inputs carry observed labels used to define the fitted task. Anchor inputs preserve selected predictions made by the trained network and thereby retain a finite part of its learned policy. Probe inputs receive hypothetical labels sampled after training and test how much further commitment remains feasible. Write these sets as $X_{\mathrm{child}}$, $X_{\mathrm{anchor}}$, and $X_{\mathrm{probe}}$. Let $y_{\mathrm{child}}:X_{\mathrm{child}}\to\mathcal C$ be the observed child-label map, and let $\widehat y_f(a)$ be network $f$'s predicted class on anchor $a$. Define its empirical statement
\[
\pi_f
=
\{p^{\varphi_f}_{x,y_{\mathrm{child}}(x)}:x\in X_{\mathrm{child}}\}
\cup
\{p^{\varphi_f}_{a,\widehat y_f(a)}:a\in X_{\mathrm{anchor}}\}.
\]
Child terms preserve observed labels. Anchor terms preserve a finite reading of the trained network's policy. The statement is empirical because both parts come from a finite observed sample or from fixed network predictions. Resulting feasibility scores concern the frozen representation and every affine classifier that can read it.

A one-dimensional example shows what joint feasibility means. Let $K=2$ and define the score difference
\[
g(z):=(W_1-W_0)z+(b_1-b_0).
\]
A base point at $z=1$ labelled class $1$ requires $g(1)\geq1$. Adding a probe at $z=-1$ labelled class $0$ requires $g(-1)\leq-1$. Both hold for $g(z)=z$. Assigning class $0$ at the same point $z=1$ creates the contradictory requirements $g(1)\geq1$ and $g(1)\leq-1$. Every bank entry asks this type of question for several probes at once under one shared head.

\subsection{The common-task condition}

Each trained network has a different feature map and therefore a different feature-classifier vocabulary. Bagi compares policies inside one vocabulary. Comparing networks therefore uses the cross-vocabulary procedure called semantic Unagi. Start with an external task stated directly as input-label requirements, before any network-specific features are introduced. Then ask whether each network-specific language preserves those same requirements. This shared template is called an uninstantiated task until a particular feature language is used to express it.

The required compatibility condition follows from the corpus definitions.

\begin{proposition}[Common-task compatibility]\label{prop:common}
Let $M\geq1$ and let $\pi_1,\ldots,\pi_M\in L_{\mathfrak v}$.

\begin{enumerate}[label=(\arabic*)]
\item If a $\mathfrak v$-task $\alpha$ has $O_\alpha\neq\emptyset$ and $\pi_i\in\Pi_\alpha$ for every $i$, then
\[
\bigcup_{i=1}^{M}\pi_i\in L_{\mathfrak v}.
\]
\item Conversely, if $\bigcup_i\pi_i\in L_{\mathfrak v}$, then there exists a $\mathfrak v$-task with nonempty correct outputs for which every $\pi_i$ is correct.
\end{enumerate}
\end{proposition}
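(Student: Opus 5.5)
For part (1) I would work directly from the correctness condition $\Ext{I_\alpha}\cap\Ext{\pi_i}=O_\alpha$. Since $O_\alpha\neq\emptyset$, pick any $o\in O_\alpha$. Then $o\in\Ext{\pi_i}$ for every $i$, so $\pi_i\subseteq o$ for every $i$, and hence $\bigcup_{i=1}^M\pi_i\subseteq o$. Because $o\in L_{\mathfrak v}$, it is finite and $T(o)=\bigcap_{p\in o}p\neq\emptyset$. A finite union of finite sets is finite, and enlarging a statement can only shrink its truth set, so $T\bigl(\bigcup_i\pi_i\bigr)\supseteq T(o)\neq\emptyset$. Therefore $\bigcup_i\pi_i\in L_{\mathfrak v}$. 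One small point needs care: the definition of $L_{\mathfrak v}$ demands finiteness. Finiteness holds because each $\pi_i$ is itself a statement, and in any case every subset of the finite set $o$ is finite.

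For part (2), write $u:=\bigcup_i\pi_i\in L_{\mathfrak v}$. I would define the task $\alpha:=\langle I_\alpha,O_\alpha\rangle$ with $I_\alpha:=\{u\}$ and $O_\alpha:=\Ext{u}$. The definition of a task requires $O_\alpha\subseteq\Ext{I_\alpha}=\Ext{u}$, and this holds trivially. Also $O_\alpha$ is nonempty because $u\in\Ext{u}$.

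To check that each $\pi_i$ is correct, I would show $\Ext{u}\cap\Ext{\pi_i}=\Ext{u}$. This reduces to the inclusion $\Ext{u}\subseteq\Ext{\pi_i}$. That inclusion follows from $\pi_i\subseteq u$: any $y\supseteq u$ also satisfies $y\supseteq\pi_i$.

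I do not expect any real obstacle. The only step needing attention is choosing the witness task in (2) so that the inputs already force the outputs. Taking $I_\alpha$ to be the union statement itself does this: the correctness equation reduces to monotonicity of $\Ext{\cdot}$ under inclusion, which is immediate. In part (1), the key observation is that any single common correct output is a common completion of all the policies.
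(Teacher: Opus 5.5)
Your proof is correct and takes the same approach as the paper's. In (1) you pick a common output $o\in O_\alpha$, which correctness places in every $\Ext{\pi_i}$, so the union lies inside a satisfiable statement. In (2) you use the same witness task $I_\alpha=\{\bigcup_i\pi_i\}$, $O_\alpha=\Ext{\bigcup_i\pi_i}$ and reduce correctness to the fact that $\Ext{\cdot}$ reverses inclusion; your checks of finiteness and of $O_\alpha\subseteq\Ext{I_\alpha}$ fill in details the paper leaves implicit.
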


Apply the proposition in the common external language of partial label assignments. Take the environment to be the set of total labellings $g:X\to\mathcal C$. The program for input $x$ and class $c$ is the set of total labellings with $g(x)=c$. A statement is therefore a partial labelling, and it is satisfiable exactly when it assigns at most one class to each input.

\begin{corollary}[Anchor agreement obstruction]\label{cor:anchor}
Suppose several external empirical policies contain the same child labels and one class label for every anchor. They are compatible with some nonempty common task if and only if they assign the same class to every anchor.
\end{corollary}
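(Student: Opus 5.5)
The plan is to apply Proposition~\ref{prop:common} in the external language of partial labellings just described. The environment is the set of total labellings $g:X\to\mathcal C$, and the program for $(x,c)$ is $q_{x,c}:=\{g:g(x)=c\}$. Each external empirical policy $\pi_i$ is then the finite set $\{q_{x,y_{\mathrm{child}}(x)}:x\in X_{\mathrm{child}}\}\cup\{q_{a,c_i(a)}:a\in X_{\mathrm{anchor}}\}$, where $c_i(a)$ is the anchor class assigned by policy $i$. By Proposition~\ref{prop:common}, ``compatible with some nonempty common task'' is equivalent to $\bigcup_i\pi_i\in L_{\mathfrak v}$. Part (1) gives the forward implication and part (2) the converse. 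The corollary therefore reduces to the claim that $\bigcup_i\pi_i$ is satisfiable if and only if all $c_i(a)$ agree for every anchor $a$.

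First, I will verify the satisfiability criterion for this language: a finite set $l$ of programs $q_{x,c}$ has nonempty truth set exactly when no input $x$ appears with two distinct classes. If $q_{x,c},q_{x,c'}\in l$ with $c\neq c'$, then $q_{x,c}\cap q_{x,c'}=\emptyset$, because a total labelling takes one value at $x$. Conversely, if each $x$ carries at most one class $c_x$, define $g(x)=c_x$ on the mentioned inputs and arbitrarily elsewhere; this $g$ lies in every program of $l$. Finiteness holds because $X$ is finite. Note that $X$ must be finite, or at least $l$ must be finite, for $l$ to be in $L_{\mathfrak v}$; here $X$ is a finite evaluation set.

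Next comes the case analysis on the union. The child parts of the $\pi_i$ are identical by hypothesis, and each $\pi_i$ is individually satisfiable. Child and anchor sets are disjoint, so the only possible clash in $\bigcup_i\pi_i$ is between $q_{a,c_i(a)}$ and $q_{a,c_j(a)}$ at the same anchor $a$ with $c_i(a)\neq c_j(a)$. If all policies agree on every anchor, the union is a single consistent partial labelling and hence lies in $L_{\mathfrak v}$. Part (2) then yields the common task; explicitly, one can take $I_\alpha=\{\emptyset\}$ and $O_\alpha=\Ext{\bigcup_i\pi_i}$ if one wants to check this directly. If some anchor disagrees, the union is unsatisfiable, and the contrapositive of part (1) excludes any common task with $O_\alpha\neq\emptyset$.

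The only delicate point is the definition of ``compatible with some nonempty common task''. I read it as existence of a $\mathfrak v$-task $\alpha$ with $O_\alpha\neq\emptyset$ and $\pi_i\in\Pi_\alpha$ for all $i$, which is exactly the hypothesis of Proposition~\ref{prop:common}. Once that reading is fixed, the argument is a direct instantiation. I expect no real obstacle beyond stating the satisfiability criterion for partial labellings carefully.
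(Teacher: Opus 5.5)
Your proposal is correct and follows the paper's own argument: labels on distinct inputs always have a common total extension, while two labels on one input are disjoint, so the union is satisfiable exactly when the anchors agree, and the two directions of Proposition~\ref{prop:common} finish the proof. Your side remark with $I_\alpha=\{\emptyset\}$ is valid here only because agreeing policies are identical sets, which gives $\Ext{\pi_i}=\Ext{\bigcup_j\pi_j}$; the choice $I_\alpha=\{\bigcup_j\pi_j\}$ from the proposition's proof is the one that works in general.
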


Each trained network supplies its own anchor labels. Common-task compatibility therefore requires anchor agreement. One disagreement creates contradictory class assignments and empties the common truth set. Agreement proves compatibility with some common task. Correctness for the observed child task and preservation of all truth conditions require separate task-level proofs.

The completed aggregate archive stores cohort summaries. Updated notebooks retain each network's list of anchor predictions. An anchor-audit notebook can recover those lists from saved checkpoints while reusing existing linear-programming records. Auditing completed cohorts therefore requires saved checkpoints. A common-task interpretation additionally requires anchor agreement.

\subsection{Matched joint bank}

A bank is a fixed list of sampled future demands used for every network. Let $H=(s_1,\ldots,s_m)$ be a finite list of partial label assignments on $X_{\mathrm{probe}}$. A partial assignment attaches classes to some probe inputs and leaves the others unspecified. Writing $H$ as an indexed list retains repeated draws instead of merging equal entries. For each entry, its reading in network $f$ is
\[
s_j^{(f)}
:=
\{p^{\varphi_f}_{x,c}:(x,c)\in s_j\}.
\]
Define
\[
J_H(f)
:=
\frac1m
\sum_{j=1}^{m}
\1\{\pi_f\cup s_j^{(f)}\in L_{\mathfrak v_{\varphi_f}}\}.
\]
Here $\1\{\cdot\}$ is the indicator. It equals one when the condition inside braces holds and zero otherwise.

\begin{proposition}[Finite-bank survival]\label{prop:bank}
Under a uniform draw from the indexed bank, $J_H(f)$ is exactly the empirical probability that the drawn external demand is jointly feasible with $\pi_f$. In the implemented design, each path is an independent sequence of sampled probe-label commitments. Four fixed prefixes are averaged within each path. Hence the expected value of $J_H(f)$ equals the expected path score under the declared probability rule for sampling paths.
\end{proposition}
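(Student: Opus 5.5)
The plan has two parts, matching the two claims of Proposition~\ref{prop:bank}: an exact identity for a uniform draw from the fixed bank, and an unbiasedness statement for the randomly generated bank.

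\textbf{First claim.} Let $J$ be uniform on the index set $\{1,\ldots,m\}$. Define the random external demand $s_J$ and its network reading $s_J^{(f)}$, formed exactly as in the definition of $J_H$. The event ``the drawn demand is jointly feasible with $\pi_f$'' is the event $\pi_f\cup s_J^{(f)}\in L_{\mathfrak v_{\varphi_f}}$. Its probability is
\[
\sum_{j=1}^{m}\frac1m\,\1\{\pi_f\cup s_j^{(f)}\in L_{\mathfrak v_{\varphi_f}}\},
\]
which is $J_H(f)$ by definition. The only point needing care is that indexing, rather than taking the set of distinct entries, keeps each repeated draw as its own term with weight $1/m$. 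This is why the statement insists on an indexed list. Membership in $L_{\mathfrak v_{\varphi_f}}$ is a deterministic LP feasibility fact for the fixed statement, so each indicator is well defined.

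\textbf{Second claim.} Here I formalise the implemented design. Suppose the bank has $N$ paths, and $N$ divides $m$. Let each path $k$ be an independent draw $\sigma_k=(c_{k,1},c_{k,2},\ldots)$ of probe-label commitments from the declared rule $Q$. Fix four prefix lengths $\ell_1,\ldots,\ell_4$. Bank entry $(k,r)$ is the prefix $\sigma_k[{:}\ell_r]$, so $m=4N$.

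Define the path score
\[
Z_k:=\frac14\sum_{r=1}^4\1\{\pi_f\cup\sigma_k[{:}\ell_r]^{(f)}\in L_{\mathfrak v_{\varphi_f}}\}.
\]
Regrouping the sum defining $J_H(f)$ by path gives $J_H(f)=\frac1N\sum_k Z_k$.

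Each $Z_k$ is a bounded measurable function of $\sigma_k$. It is measurable because feasibility depends only on the finitely many labels in the prefix, and the probe set and class set are finite. The paths share the law $Q$, so linearity of expectation gives
\[
\mathbb E[J_H(f)]=\mathbb E_Q[Z_1],
\]
which is the expected path score. Independence is not needed for this identity, only identical distribution. Independence matters only for concentration, which the claim does not assert.

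\textbf{Main obstacle.} The mathematics is elementary, so the real difficulty is fixing the conditioning correctly. The network $f$, its features $\varphi_f$ and $\pi_f$ must be held fixed, or be independent of the bank draw. Otherwise the expectation is not the path expectation under $Q$. Because the bank is fixed and matched across networks before evaluation, I will state the expectation conditional on $f$ and note that this independence holds by design.

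I will also verify that the prefixes are nested with constant length across paths, so every path contributes exactly four terms. If path lengths varied, the regrouping would require weighting each path by its number of prefixes.
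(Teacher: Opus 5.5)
Your proposal is correct and follows the paper's proof: a uniform index draw reproduces the indicator average defining $J_H(f)$. Regrouping the bank into 128 four-prefix path scores and applying linearity of expectation then gives the expected-path-score identity. Your added remarks are accurate refinements of the paper's one-line appeal to i.i.d.\ paths: only identically distributed paths (not independence) are needed, and $f$ must be held fixed or be independent of the bank draw.
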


Every successful union is a member of $\Ext{\pi_f}$. Thus $J_H$ estimates external-demand survival on the indexed bank within each feature-classifier language.

\subsection{Affine invariance}

\begin{theorem}[Affine feature invariance]\label{thm:affine}
Let $A\in\R^{d\times d}$ be invertible and let $a\in\R^d$. Define $\varphi'_f(x)=A\varphi_f(x)+a$. Matrix $A$ mixes feature coordinates without losing information, while $a$ translates them. Construct the transformed vocabulary, empirical statement, and external readings from $\varphi'_f$. Under
\[
W'=WA^{-1},
\qquad
b'=b-WA^{-1}a,
\]
for every external bundle $s$,
\[
\pi_f\cup s^{(f)}\in L_{\mathfrak v_{\varphi_f}}
\quad\Longleftrightarrow\quad
\pi'_f\cup s^{(f')}
\in L_{\mathfrak v_{\varphi'_f}}.
\]
Hence $J_H$ is unchanged.
\end{theorem}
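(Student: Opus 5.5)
The plan is to build an explicit bijection between the two environments and show that it carries every program of $\mathfrak v_{\varphi_f}$ exactly onto the program with the same index $(x,c)$ in $\mathfrak v_{\varphi'_f}$. Define
\[
T:\R^{Kd+K}\to\R^{Kd+K},\qquad T(W,b):=(WA^{-1},\,b-WA^{-1}a).
\]
$T$ is a bijection because $A$ is invertible. Its inverse is $(W',b')\mapsto(W'A,\,b'+W'a)$: setting $W=W'A$ gives $WA^{-1}=W'$, and $b=b'+W'a$ gives $b-WA^{-1}a=b'$.

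The key computation is score preservation. For every input $x$, class $c$ and head $(W,b)$,
\[
W'_c\!\cdot\!\varphi'_f(x)+b'_c
=(WA^{-1})_c\!\cdot\!(A\varphi_f(x)+a)+b_c-(WA^{-1}a)_c
=W_c\!\cdot\!\varphi_f(x)+b_c .
\]
Hence each strict inequality defining $p^{\varphi_f}_{x,c}$ holds at $(W,b)$ exactly when the corresponding inequality holds at $T(W,b)$. This gives $T(p^{\varphi_f}_{x,c})=p^{\varphi'_f}_{x,c}$ for all $x\in X$ and $c\in\mathcal C$.

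Next, transport bundles. The empirical statement $\pi_f$ and each reading $s^{(f)}$ are indexed by pairs $(x,c)$. Child labels $y_{\mathrm{child}}$ and external demands $s$ are fixed independently of the features. The one point to pin down is the anchor labels $\widehat y_f(a)$. I take these to be the trained network's predictions, which are fixed data that do not change with the coordinates of the feature space. The same label is also recovered under the re-expressed original head, because scores are preserved. So $\pi'_f$ and $s^{(f')}$ have the same index sets as $\pi_f$ and $s^{(f)}$.

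Since $T$ is a bijection, it commutes with finite intersections, which gives
\[
T\!\Big(\bigcap_{(x,c)\in I}p^{\varphi_f}_{x,c}\Big)=\bigcap_{(x,c)\in I}p^{\varphi'_f}_{x,c}
\]
for any finite index set $I$. A bijection maps nonempty sets to nonempty sets and empty sets to empty sets. Therefore $\pi_f\cup s^{(f)}$ has nonempty truth set if and only if $\pi'_f\cup s^{(f')}$ does, which is the claimed equivalence.

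The main obstacle is a bookkeeping subtlety. Distinct indices $(x,c)$ might give equal programs, for example when two inputs share a feature vector, so the "union" of programs may look different from the union of indices. This is harmless. $T$ maps program to program and preserves equality, so it induces a bijection $L_{\mathfrak v_{\varphi_f}}\to L_{\mathfrak v_{\varphi'_f}}$ via $l\mapsto\{T(p):p\in l\}$, and membership is decided by nonemptiness of the image intersection. Finally, the equivalence holds for each bank entry $s_j$, so every indicator in the definition of $J_H(f)$ is unchanged. Hence $J_H$ is invariant.
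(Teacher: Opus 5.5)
Your proof is correct and follows the paper's argument: the paper likewise computes that the head map $(W,b)\mapsto(WA^{-1},\,b-WA^{-1}a)$ preserves every class score and is a bijection because $A$ is invertible, so every joint statement keeps its feasibility status. Your version spells out what the paper leaves implicit. It shows that the map carries each $p^{\varphi_f}_{x,c}$ onto $p^{\varphi'_f}_{x,c}$, that the child and anchor labels (and hence the index sets of $\pi_f$ and $s^{(f)}$) are unchanged, and that a bijection preserves nonemptiness of finite intersections.
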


The replacement head ranges over all affine parameters. The implementation uses margin one, meaning a required class score must exceed every competing score by at least one. Every strictly feasible statement can be positively rescaled to meet any fixed positive margin.

Freezing the feature map fixes the abstraction layer at which completion feasibility is measured. Penultimate means the layer immediately before the output classifier. The experiment uses that layer, 32 anchors, four commitment orders corresponding to $2,4,8,$ and $16$ simultaneous probe labels, equal order weights, and a uniform rule for sampling probe labels. All choices were fixed before outcome access. The evolutionary sequel learns which abstraction layers and weights best predict validation performance on separate cohorts.
\section{Experiments}\label{sec:experiments}

\subsection{Function-preserving rescaling}

The retained rescaling experiment uses three-layer multilayer perceptrons, meaning fully connected feed-forward networks, with ReLU activations and two positive layer-rescaling maps. Across twelve trained networks, the largest raw Hessian-trace ratio is $99\times$. Every output and test prediction is preserved within numerical precision. Appendix~\ref{app:rescale} reports representative values. This test establishes coordinate variability within one learned function.

\subsection{Cohorts}

The joint experiment trains two cohorts, meaning two groups of networks evaluated under matched protocols. One contains 100 networks trained on MNIST, a handwritten-digit dataset, and the other contains 100 trained on Fashion-MNIST, a clothing-image dataset \citep{xiao2017fashionmnist}. Every network has architecture $784\to64\to8\to10$. Within each cohort, the child set and training protocol are fixed. Only the training seed changes. A seed controls pseudorandom parameter initialisation and the order of small training batches, called minibatches. The analysis therefore concerns model variation conditional on one sample and one algorithm. Algorithm-level performance would additionally average over training samples and training randomness.

A fixed permutation of the training partition allocates 250 child inputs, 32 anchors, and 128 free probes. In the supplied notebook, the official test partition is loaded only after all 100 rows of training-side measurements have been computed. A cryptographic hash records a digital fingerprint of those rows before test access. The estimator samples candidate labels for probe inputs, while their dataset labels remain sealed.

The bank contains 128 nested paths. Each path samples 16 probe-label commitments, and every longer prefix contains the shorter one. Prefixes containing $2,4,8,$ and $16$ simultaneous commitments produce 512 joint statements per network. One shared affine head must satisfy each statement together with the empirical policy. The nested design permits early termination of each path. Odd and even paths form disjoint half-banks for reliability.

Comparators are raw Hessian trace, final-layer relative flatness after feature-variance normalisation \citep{petzka2021relative}, spectral-product margin motivated by spectral margin theory \citep{bartlett2017spectral}, and parameter $\ell_2$ norm. Raw Hessian trace sums coordinate curvature. Relative flatness adjusts final-layer curvature for weight and feature scale. Spectral-product margin divides observed class-score separation by a product of layer spectral norms. A spectral norm is the largest singular value of a weight matrix. Parameter $\ell_2$ norm is the Euclidean size of all parameters. Scores are oriented so that a larger value predicts greater accuracy. Raw Hessian trace, relative flatness, and weight norm are negated. Joint bank score and margin retain their sign.

Spearman's $\rho$ measures whether two network rankings rise together. A 50,000-draw permutation test estimates how often an association at least this large appears after accuracy ranks are randomly reassigned. Network-level percentile bootstrap intervals use 10,000 resamples of networks. Holm correction controls family-wise error, meaning the chance of at least one false rejection in a family, across the two natural-label joint tests and separately across eight baseline tests. Paired differences between correlations from the same networks use Bonferroni $98.75\%$ bootstrap intervals within each dataset. The reducer performs these calculations using predeclared equal weights for the four commitment orders. CI abbreviates confidence interval. Each reported $q$ value is a $p$ value after Holm correction.

\ifjready
\begin{table}[t]
\centering
\small
\caption{MNIST cohort. Larger oriented scores predict greater test accuracy. Split-bank Spearman-Brown reliability is $\mrel$.}
\label{tab:mnres}
\begin{tabular}{@{}lccc@{}}
\toprule
Measure & Oriented $\rho$ & 95\% CI & Holm $q$ \\
\midrule
Joint bank score $J_H$ & $\mjr$ & \mjci & $\mjq$ \\
Raw Hessian trace & $\mhr$ & \mhci & $\mhq$ \\
Final-layer relative flatness & $\mfr$ & \mfci & $\mfq$ \\
Spectral-product margin & $\mmr$ & \mmci & $\mmq$ \\
Weight norm & $\mwr$ & \mwci & $\mwq$ \\
\bottomrule
\end{tabular}
\end{table}

\begin{table}[t]
\centering
\small
\caption{Fashion-MNIST cohort. Larger oriented scores predict greater test accuracy. Split-bank Spearman-Brown reliability is $\frel$.}
\label{tab:fmres}
\begin{tabular}{@{}lccc@{}}
\toprule
Measure & Oriented $\rho$ & 95\% CI & Holm $q$ \\
\midrule
Joint bank score $J_H$ & $\fjr$ & \fjci & $\fjq$ \\
Raw Hessian trace & $\fhr$ & \fhci & $\fhq$ \\
Final-layer relative flatness & $\ffr$ & \ffci & $\ffq$ \\
Spectral-product margin & $\fmr$ & \fmci & $\fmq$ \\
Weight norm & $\fwr$ & \fwci & $\fwq$ \\
\bottomrule
\end{tabular}
\end{table}

The joint bank score has a positive association in both natural-label cohorts. Confidence intervals exclude zero and both tests pass the primary Holm correction. Split-bank Spearman-Brown reliability compares scores from odd and even path halves, then adjusts their agreement to the full bank length. It is $\mrel$ on MNIST and $\frel$ on Fashion-MNIST. Linear-programming lower and upper bounds count any unresolved solver outcome as infeasible or feasible. They produce identical correlations here because every solver outcome is resolved.

Spectral-product margin has the largest point estimate in both cohorts. Smaller weight norm also has a corrected positive association. Paired intervals establish a greater joint-bank correlation than raw Hessian trace and relative flatness on Fashion-MNIST. The remaining paired intervals include zero.

The 128 paths are independent draws from the declared path-sampling rule. For path $S_i$, define $u(f,S_i)$ as the mean feasibility of its four nested prefixes. Then $J_H(f)=128^{-1}\sum_i u(f,S_i)$. Conditioned on the trained cohort and fixed inputs, a uniform prior over the 100 frozen networks is independent of the sampled paths. Section~\ref{sec:pac} applies PAC-Bayes to a point posterior, meaning a distribution that places all probability on the network selected by bank score before test accuracy is loaded.

\begin{table}[t]
\centering
\small
\caption{PAC-Bayes certificates for the bank-selected network. Each cohort uses $n=128$, a uniform prior over 100 frozen networks, and $\delta=0.025$. The two bounds therefore hold together with probability at least $0.95$. Each bound concerns expected feasibility over future paths drawn by the same sampling rule.}
\label{tab:pac}
\begin{tabular}{@{}lccc@{}}
\toprule
Cohort & Selected seed & Bank score $J_H$ & Expected path-score lower bound \\
\midrule
MNIST & $\mpseed$ & $\mpemp$ & $\mplb$ \\
Fashion-MNIST & $\fpseed$ & $\fpemp$ & $\fplb$ \\
\bottomrule
\end{tabular}
\end{table}

For either cohort, the right-hand side of the binary KL inequality is $\pacrhs$. Positive lower bounds certify expected external-demand survival under the full path-sampling rule rather than only on the 128 observed paths. Classification-error certification remains a separate target.
\else
The cohort results are pending.
\fi

\subsection{Random labels}

A third 100-network MNIST cohort uses one fixed random relabelling of the training partition and a separate fixed relabelling of the test partition. Input splits, architecture, optimiser, bank, and seeds match the natural-label cohort.

\ifjready
The random-label association is $\rho=\rrho$ with 95\% CI \rci{} and Holm $q=\rrq$. Mean $J_H$ is $\jmnmean$ under natural labels and $\jrdmean$ under random labels. Their seed-paired natural-minus-random difference is $\rdiff$ with Holm $q=\rdq$.
\fi

Random-label training leaves more of the uniform joint bank feasible on average, while its association with held-out random-label accuracy centres near zero. This result bounds the bank estimator. The task, learned policy, representation, and relevant future-demand law all changed. The control therefore concerns cross-task behaviour under a uniform arbitrary-label law. The estimator has a task-specific interpretation.

\begin{figure}[t]
\centering
\includegraphics[width=0.48\textwidth]{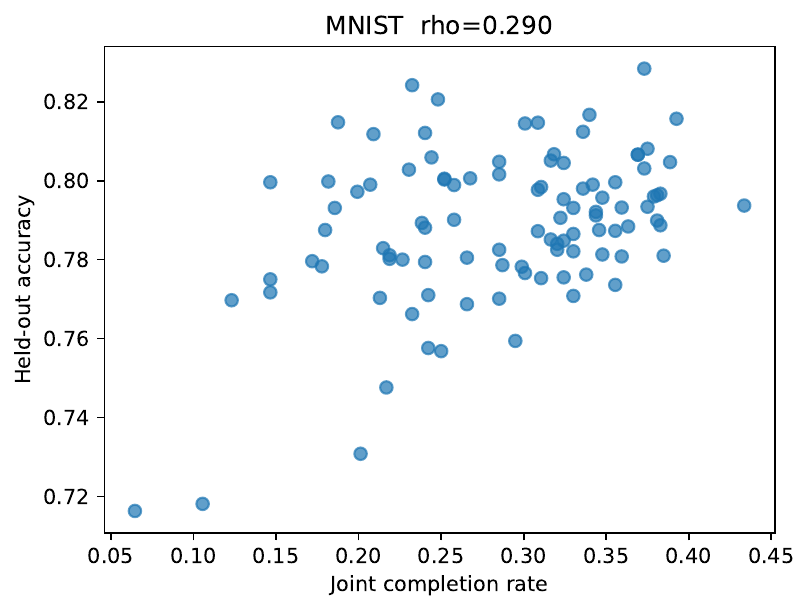}\hfill
\includegraphics[width=0.48\textwidth]{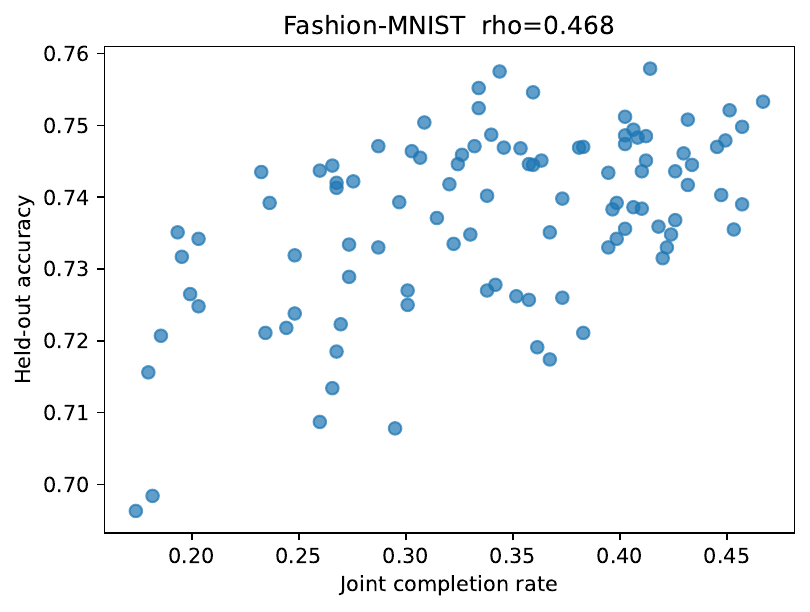}\\[0.5ex]
\includegraphics[width=0.48\textwidth]{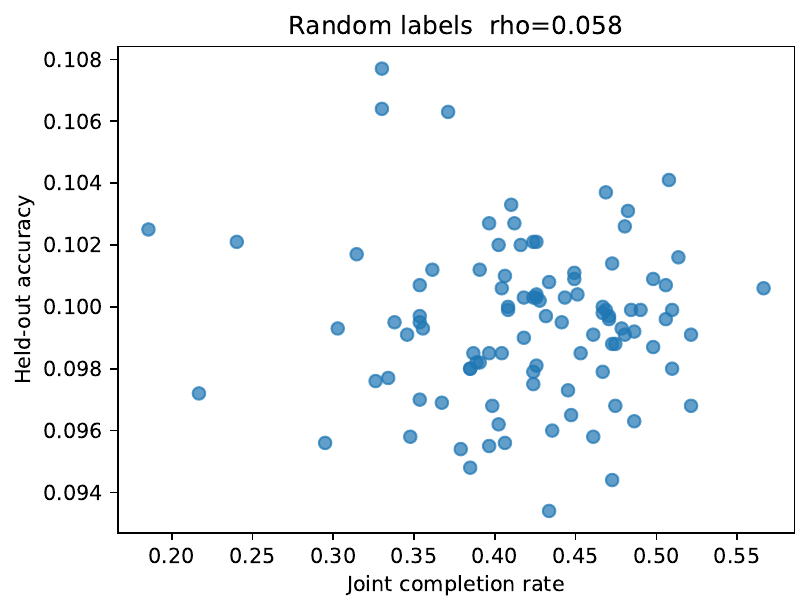}
\caption{Joint bank score and held-out accuracy. Natural-label cohorts show positive ranking associations. The random-label association centres near zero despite greater mean bank score.}
\label{fig:scatter}
\end{figure}

\section{PAC-Bayes Certificates}\label{sec:pac}

PAC-Bayes combines Probably Approximately Correct learning bounds with Bayesian priors and posteriors. It bounds expected future performance of a distribution over candidates. A prior is chosen before sampled future demands are observed. A posterior may be chosen afterward. Kullback-Leibler divergence, abbreviated KL, measures how much the posterior departs from the prior. Here candidates are child-correct policies and performance is survival of future demands. Let $S_1,\ldots,S_n$ be independent future-demand sets drawn from $P$. Let $Q_0$ be a prior on $\Pi_\alpha$ fixed before these demands are drawn, and let $Q$ be any posterior on $\Pi_\alpha$. Define
\[
\widehat G_n(Q)
:=
\frac1n\sum_{i=1}^n
\mathbb E_{\pi\sim Q}\1\{S_i\subseteq B_\pi\},
\qquad
G(Q,P)
:=
\mathbb E_{\pi\sim Q}G(\pi,P).
\]
For success probabilities $a,b\in[0,1]$, their binary KL divergence is given by
\[
\bkl(a\Vert b)
=
a\log\frac ab+(1-a)\log\frac{1-a}{1-b}
\]
where $0\log(0/b)$ and $0\log[0/(1-b)]$ are defined as zero. A term is infinite when its numerator is positive and its denominator is zero.

\begin{theorem}[PAC-Bayes future-demand certificate]\label{thm:pacweak}
Let $n\geq2$ and let $\delta\in(0,1)$ be the allowed failure probability. With probability at least $1-\delta$, simultaneously for every posterior $Q$ with finite $\operatorname{KL}(Q\Vert Q_0)$,
\[
\bkl\!\left(\widehat G_n(Q)\middle\Vert G(Q,P)\right)
\leq
\frac{\operatorname{KL}(Q\Vert Q_0)+\log(2\sqrt n/\delta)}{n}.
\]
\end{theorem}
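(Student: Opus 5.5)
This is the Langford–Seeger–Maurer PAC-Bayes-kl bound applied to bounded losses. The plan is to check that the setup fits that theorem and then run its standard proof: a change of measure, an exponential-moment bound, and Markov's inequality.

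\textbf{Setup.} For each $\pi\in\Pi_\alpha$ define the indicator
\[
Z_i(\pi):=\1\{S_i\subseteq B_\pi\}\in\{0,1\}.
\]
This is measurable because the paper assumes every containment event is measurable. Since the $S_i$ are i.i.d.\ draws from $P$, for each fixed $\pi$ the variables $Z_1(\pi),\ldots,Z_n(\pi)$ are i.i.d.\ Bernoulli with mean $G(\pi,P)$. Write $\widehat G_n(\pi)$ for their average. Joint measurability of $(\pi,S)\mapsto Z(\pi)$ also needs to hold, and I would state it as a standing assumption.

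\textbf{Step 1 (exponential moment).} Fix $\pi$. I would show
\[
\mathbb E\, e^{n\,\bkl(\widehat G_n(\pi)\Vert G(\pi,P))}\leq 2\sqrt n .
\]
This is Maurer's lemma for $[0,1]$-valued i.i.d.\ variables. In the Bernoulli case it can be proved directly:
\begin{itemize}[leftmargin=*]
\item Write the expectation as $\sum_k \binom nk p^k(1-p)^{n-k}e^{n\,\bkl(k/n\Vert p)}$.
\item Each summand equals $\binom nk (k/n)^k(1-k/n)^{n-k}$, which no longer depends on $p$.
\item Bound that sum by $2\sqrt n$ using Stirling's estimates, with the endpoint terms $k=0$ and $k=n$ handled separately under the stated $0\log 0$ conventions. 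This is where the hypothesis $n\geq2$ enters.
\end{itemize}
I expect this step to be the main obstacle. Rather than redo the Stirling analysis, I would cite \citet{maurer2004note} for it.

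\textbf{Step 2 (integrate over the prior).} By Fubini, since $Q_0$ is fixed before the data are drawn,
\[
\mathbb E_{S}\,\mathbb E_{\pi\sim Q_0} e^{n\,\bkl(\widehat G_n(\pi)\Vert G(\pi,P))}\leq 2\sqrt n .
\]
Markov's inequality then gives the following. With probability at least $1-\delta$,
\[
\mathbb E_{\pi\sim Q_0} e^{n\,\bkl(\cdot)}\leq 2\sqrt n/\delta .
\]

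\textbf{Step 3 (change of measure).} On that event, take any $Q$ with finite $\operatorname{KL}(Q\Vert Q_0)$. Then $Q\ll Q_0$, and the Donsker–Varadhan variational inequality gives
\[
\mathbb E_{Q}\, n\,\bkl(\widehat G_n(\pi)\Vert G(\pi,P))\leq \operatorname{KL}(Q\Vert Q_0)+\log\mathbb E_{Q_0}e^{n\,\bkl(\cdot)} .
\]
The event was fixed before $Q$ was chosen, so the bound holds simultaneously for all posteriors.

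\textbf{Step 4 (Jensen).} The map $(a,b)\mapsto\bkl(a\Vert b)$ is jointly convex. By Jensen's inequality,
\[
\bkl\!\left(\mathbb E_Q\widehat G_n(\pi)\,\middle\Vert\,\mathbb E_Q G(\pi,P)\right)\leq \mathbb E_Q\,\bkl\!\left(\widehat G_n(\pi)\Vert G(\pi,P)\right).
\]
The two expectations on the left are exactly $\widehat G_n(Q)$ and $G(Q,P)$, the second by Fubini. Dividing the Step 3 bound by $n$ then gives the stated inequality.

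One edge case needs care. If $\bkl$ is infinite on a set of positive $Q_0$-measure, the moment in Step 1 would be infinite. This cannot happen: $\widehat G_n(\pi)>0$ forces $G(\pi,P)>0$ almost surely, and likewise $\widehat G_n(\pi)<1$ forces $G(\pi,P)<1$, so the infinite terms have probability zero.
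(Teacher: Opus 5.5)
Your proposal is correct. It is the standard proof (Maurer's moment bound, Markov, Donsker--Varadhan, Jensen) of the PAC-Bayes-kl inequality that the paper cites as a black box. The paper applies that inequality to the loss $1-\1\{S\subseteq B_\pi\}$ and then uses $\bkl(1-a\Vert 1-b)=\bkl(a\Vert b)$; you apply the moment bound directly to the survival indicator, so that step is not needed.

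One minor point: if I recall Maurer's note correctly, it states the $2\sqrt n$ bound for $n\geq 8$. For $2\leq n\leq 7$ you would check $\sum_{k}\binom{n}{k}(k/n)^k(1-k/n)^{n-k}\leq 2\sqrt n$ directly. That sum satisfies the bound for every $n\geq 1$, with equality at $n=1$, so the endpoint terms are not where $n\geq 2$ is used.
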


This is PAC-Bayes-kl applied to the bounded loss $1-\1\{S\subseteq B_\pi\}$ \citep{seeger2002pac,maurer2004note,chugg2023unified}. The theorem holds for arbitrary independent future-demand draws. The prior may depend on the fixed child task because the PAC sample consists of later demand experiments. Independence from those later demands is required. The same proof applies to any candidate set fixed before the demand sample and any bounded survival score. The neural application uses this generic candidate-set form for frozen network-policy pairs.

\begin{samepage}
A second connection follows from the existing Stack Theory reference measure. For a measurable set $E$ and a measurable region $A$ with $0<\mu(A)<\infty$, the conditional measure is given by
\[
\mu(E\mid A):=\frac{\mu(E\cap A)}{\mu(A)}.
\]
For probability measures $Q$ and $\nu$, write $Q\ll\nu$ when $Q$ assigns zero mass to every set that has zero mass under $\nu$.
\end{samepage}

\begin{theorem}[Buffer information projection]\label{thm:iproject}
Assume $0<\mu(B_\pi)\leq\mu(U)<\infty$. Let $Q$ be any probability measure that places all its mass inside $B_\pi$ and assigns zero mass to every set with zero $\mu(\,\cdot\,\mid U)$ measure.
\[
\operatorname{KL}\!\left(Q\middle\Vert\mu(\,\cdot\,\mid U)\right)
=
\operatorname{KL}\!\left(Q\middle\Vert\mu(\,\cdot\,\mid B_\pi)\right)
+
\log\frac{\mu(U)}{\mu(B_\pi)}.
\]
Consequently,
\[
\inf_{\substack{Q\ll\mu(\,\cdot\,\mid U)\\ Q(B_\pi)=1}}
\operatorname{KL}\!\left(Q\middle\Vert\mu(\,\cdot\,\mid U)\right)
=
\log\frac{\mu(U)}{\mu(B_\pi)},
\]
and the unique minimiser, except on sets of $\mu$-measure zero, is $\mu(\,\cdot\,\mid B_\pi)$.
\end{theorem}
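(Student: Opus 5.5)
The plan is to compare the two reference measures through their Radon--Nikodym derivative; the rest is bookkeeping. Write $\nu_U:=\mu(\,\cdot\,\mid U)$ and $\nu_B:=\mu(\,\cdot\,\mid B_\pi)$. Both are probability measures because $0<\mu(B_\pi)\leq\mu(U)<\infty$, and $B_\pi\subseteq U$.

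For every measurable $E\subseteq B_\pi$ we have $\nu_U(E)=\mu(E)/\mu(U)$ and $\nu_B(E)=\mu(E)/\mu(B_\pi)$. Hence, restricted to $B_\pi$,
\[
\frac{d\nu_B}{d\nu_U}=\frac{\mu(U)}{\mu(B_\pi)}
\quad\text{on }B_\pi .
\]

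The first step is to transfer absolute continuity from $\nu_U$ to $\nu_B$. Suppose $\nu_B(E)=0$. Then $\mu(E\cap B_\pi)=0$, so $\nu_U(E\cap B_\pi)=0$. The hypothesis $Q\ll\nu_U$ gives $Q(E\cap B_\pi)=0$. Since $Q(B_\pi)=1$, it follows that $Q(E)=0$, so $Q\ll\nu_B$.

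The chain rule for densities then gives
\[
\frac{dQ}{d\nu_U}=\frac{dQ}{d\nu_B}\cdot\frac{\mu(U)}{\mu(B_\pi)}
\qquad Q\text{-a.s.},
\]
since $Q$ lives on $B_\pi$.

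Next, take logarithms and integrate against $Q$. This yields the displayed identity:
\[
\operatorname{KL}(Q\Vert\nu_U)=\operatorname{KL}(Q\Vert\nu_B)+\log\frac{\mu(U)}{\mu(B_\pi)}.
\]
The additive constant integrates to itself because $Q$ is a probability measure.

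This step is the only delicate point. The two KL integrals may be $+\infty$. To handle this, I will note that each side equals the integral of the same function up to a finite constant shift. Thus one side is infinite if and only if the other is, and the identity holds in the extended reals. The negative part of $\log\frac{dQ}{d\nu_B}$ is $\nu_B$-integrable by the standard bound $x\log x\geq -1/e$, so the integral is well defined.

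For the variational statement, apply the identity to every admissible $Q$. Gibbs' inequality gives $\operatorname{KL}(Q\Vert\nu_B)\geq 0$, with equality if and only if $Q=\nu_B$. Therefore the infimum is at least $\log(\mu(U)/\mu(B_\pi))$.

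The choice $Q=\nu_B$ is admissible:
\begin{itemize}
\item it has $\nu_B(B_\pi)=1$;
\item if $\nu_U(E)=0$ then $\mu(E\cap U)=0$, hence $\mu(E\cap B_\pi)=0$, hence $\nu_B(E)=0$, so $\nu_B\ll\nu_U$.
\end{itemize}
It attains the bound, since $\operatorname{KL}(\nu_B\Vert\nu_B)=0$.

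Uniqueness up to $\mu$-null sets follows from the equality case of Gibbs' inequality. Any minimiser must have $\operatorname{KL}(Q\Vert\nu_B)=0$, so $dQ/d\nu_B=1$ holds $\nu_B$-almost everywhere. Equivalently, $Q$ and $\mu(\,\cdot\,\mid B_\pi)$ agree except on sets of $\mu$-measure zero.
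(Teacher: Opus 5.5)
Your proof is correct and takes the same route as the paper: the constant density $d\mu(\,\cdot\,\mid B_\pi)/d\mu(\,\cdot\,\mid U)=\mu(U)/\mu(B_\pi)$ on $B_\pi$, the log-density decomposition integrated against $Q$, and nonnegativity of KL for both the infimum and the uniqueness claim, with your three checks supplying details the paper leaves implicit: the transfer to $Q\ll\mu(\,\cdot\,\mid B_\pi)$, the treatment of infinite KL values, and the admissibility of $\mu(\,\cdot\,\mid B_\pi)$. One small wording fix: the bound $x\log x\geq-1/e$ shows that the negative part of $\log\bigl(dQ/d\mu(\,\cdot\,\mid B_\pi)\bigr)$ is $Q$-integrable, not $\mu(\,\cdot\,\mid B_\pi)$-integrable as you wrote (that can fail), and $Q$-integrability is exactly what makes both KL integrals well defined in $(-\infty,+\infty]$.
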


Information projection means finding the distribution closest to a reference distribution in KL divergence while obeying a stated constraint.

The minimum information cost of restricting the unseen-region reference measure to one policy buffer is the negative logarithm of its normalised buffer measure. The theorem concerns completion measure and future compatibility. Appendix~\ref{app:pacproof} gives a completion-level survival certificate which scores each completion by its own future compatibility.

For the neural bank, treat each frozen network-policy pair as one candidate and each path as one bounded demand experiment. The uniform prior over the 100 frozen networks is chosen independently of path sampling. Because the theorem holds for every posterior at once, bank score may select one point posterior after paths are observed. Table~\ref{tab:pac} reports lower bounds for the declared external path-sampling rule.

\section{Related Work}\label{sec:related}

Research on generalisation often asks how much a learner has committed beyond the evidence. The disagreement is about where that commitment lives. Flatness places it in parameter-space geometry. Capacity measures place it in a hypothesis class. Minimum description length places it in an encoding. Stability places it in the response of a learning algorithm to changes in data. PAC-Bayes places it in a distribution over predictors and its information cost relative to a prior. Weakness places it in the completion structure of one correct policy inside an embodied language. These quantities can correlate because training, architecture, and vocabulary connect them. They remain different mathematical objects.

\subsection{Flatness and the level of explanation}

The flat-minima account began with the proposal that broad low-loss regions support generalisation \citep{hochreiter1997flat}. Later work connected large-batch training with high curvature and a generalisation gap, while Sharpness-Aware Minimisation turned the idea into an effective training objective \citep{keskar2017,foret2021sharpnessaware}. This line has practical force. It also raises a question about the level at which an explanation is meant to hold.

Raw Hessian curvature belongs to a parameterisation. ReLU symmetries can move a fixed input-output function along a family of parameter vectors with different curvature \citep{dinh2017sharp}. The rescaling experiment in Section~\ref{sec:curvature} reproduces that point without changing a prediction. This rules out raw Hessian trace as an intrinsic property of the represented function. It leaves room for two other claims. Curvature can record how an optimiser reached a parameter vector, and a curvature measure can be modified so that a stated class of reparameterisations leaves it unchanged. Normalised flatness, relative flatness, and adaptive sharpness pursue the second route \citep{tsuzuku2020normalized,petzka2021relative,kwon2021asam}. Their invariance groups, bounds, and optimisation uses deserve evaluation on their own terms.

Weakness addresses another level. Once an embodied language and a child task are fixed, a correct policy is scored by the compatible commitments it leaves available. The resulting quantity concerns behaviour expressible in that language. A parameter-space measure can track the same ordering in a training regime, yet such a correlation would make it a proxy for the behavioural structure rather than identify that structure by itself. This paper therefore compares raw curvature, relative flatness, margin, norm, and joint completion on the same network cohorts instead of treating one failed curvature definition as a rejection of every geometry-based account.

\subsection{Version spaces, capacity, and stability}

Version-space learning starts from the hypotheses consistent with observed data \citep{mitchell1982}. Stack Theory uses the same first step. The child-correct policy version space is $\Pi_\alpha$. Free-maxing then ranks the members of $\Pi_\alpha$ by weakness and chooses a maximiser. Earlier papers called this meta-approach \emph{w-maxing} because it maximises the symbol $w$. The name \emph{free-maxing} states the interpretation before the notation. A weakest correct policy is one that maximises freedom within the bounds of correctness. It maintains the integrity of a system without imposing unnecessary constraints.

The size $|\Pi_\alpha|$ and the weakness $w(\pi)$ answer different questions. The first counts how many child-correct policies remain available to the learner. The second scores one of those policies by the completion mass still compatible with choosing it. A large version space can contain policies with very different weaknesses, and every member receives the same class-level value $|\Pi_\alpha|$. Weakness is therefore a within-version-space ordering.

VC dimension also concerns a class rather than one selected policy. It asks for the largest sample that a hypothesis class can label in every possible way \citep{vapnik1971}. This supports worst-case generalisation bounds and comparisons between classes. It does not normally distinguish two child-correct policies inside one fixed class. Norm and margin bounds refine class-level capacity using properties of the selected predictor and sample \citep{neyshabur2017exploring,bartlett2017spectral}. Spectral-product margin performs well in the present cohorts and has larger point estimates than $J_H$. Its mathematical object is observed class separation normalised by network scale. Weakness instead measures which further commitments remain jointly satisfiable. A margin can serve as a low-cost indicator of that freedom without becoming identical to it.

Algorithmic stability moves the unit of analysis from a class or policy to the learning map from samples to predictors. A stable algorithm changes little when one training example is replaced, which supports bounds on its expected generalisation gap \citep{bousquet2002stability}. Weakness holds the child task fixed and compares policies that are all correct for it. The two views can be combined. A training algorithm induces a distribution over $\Pi_\alpha$, while weakness evaluates the policies it selects. Generalisation may then be discussed either conditionally for one policy, task, and demand law, or in expectation over the samples and randomness of an algorithm. The policy-level and algorithm-level questions should not be conflated.

\subsection{Simplicity, vocabularies, and the degeneracy trap}

Minimum description length and Solomonoff-style induction prefer short descriptions \citep{rissanen1978,grunwald2007,hutter2005}. These methods require a code or reference machine. The choice is often productive because a well-designed code carries inductive bias. It also means that description length belongs to an encoding. Stack Theory makes the corresponding choice explicit as a vocabulary $\mathfrak v$ and the satisfiable language $L_{\mathfrak v}$ it generates \citep{bennett2024b,bennett2026wrong}.

The distinction between $L_{\mathfrak v}$ and the full powerset $2^{\mathfrak v}$ is central. A member of $2^{\mathfrak v}$ is any bundle of vocabulary items. A member of $L_{\mathfrak v}$ is a bundle that can be true in at least one state. Physical incompatibilities, logical exclusions, and embodied limits remove many raw bundles in a structured vocabulary. Weakness counts only satisfiable supersets.

Stack Theory calls a vocabulary degenerate when all of its programs can be true together. For finite $\mathfrak v$, this is equivalent to
\[
L_{\mathfrak v}=2^{\mathfrak v}.
\]
Every subset is then satisfiable. For any statement $l$,
\[
w(l)=|\Ext{l}|=2^{|\mathfrak v|-|l|}.
\]
Literal-count simplicity and weakness consequently induce the same ranking. Each unasserted vocabulary item can be added or omitted independently, so fewer asserted items mean more completions. This is the degeneracy trap. It makes a simplicity rule appear fundamental because the vocabulary has removed every source of disagreement between simplicity and weakness.

A nondegenerate vocabulary contains incompatibility structure, so $L_{\mathfrak v}$ is a proper subset of $2^{\mathfrak v}$. Equal-length statements can then have different numbers of satisfiable supersets. A longer statement can also leave more compatible completions than a shorter one when its programs occupy a less restrictive part of the language. Description length and weakness may still correlate. Architecture, usage frequency, and learned representation can make frequently successful weak policies cheaper to encode. The correlation then comes from how the vocabulary was built and used. It is not an identity between short form and adaptable function.

This distinction also clarifies the scope of claims about AIXI and Ockham-style priors. A code can be an effective engineering prior. It does not provide a representation-independent ordering of embodied policies. Weakness retains the vocabulary dependence that any embodied criterion must have, while remaining unchanged by reparameterisations that preserve represented behaviour within the declared language. Across embodiments, semantic Unagi compares matched external truth conditions rather than raw code lengths or raw extension counts \citep{bennett2026wrong}.

\subsection{Maximum entropy and maximum freedom}

Maximum entropy is often described as least commitment in probability. Given a finite sample space and stated constraints, it chooses the distribution with greatest Shannon entropy \citep{jaynes1957}. Weakness is also a least-commitment principle, but it operates on policies and feasible completions. Their relation depends on the sample space over which entropy is maximised.

Let $Y$ be drawn uniformly from the feasible embodied language $L_{\mathfrak v}$. This is the maximum-entropy distribution on that finite sample space when no further information is imposed. For any statement $l$,
\[
P(l\subseteq Y)=\frac{|\Ext{l}|}{|L_{\mathfrak v}|}=\frac{w(l)}{|L_{\mathfrak v}|}.
\]
Thus normalised weakness is exactly the maximum-entropy continuation probability once the sample space consists of feasible embodied statements. The same calculation applies to future demands. If $S$ is uniform on $2^U$, then
\[
P(S\subseteq B_\pi)=2^{|B_\pi|-|U|},
\]
so the weakest correct policy is Bayes optimal under maximum entropy over the declared demand-set space.

A different result follows when entropy is maximised over raw vocabulary bits. Let $Z$ be uniform on $2^{\mathfrak v}$. Every vocabulary item is then included independently with probability one half, and
\[
P(l\subseteq Z)=2^{-|l|}.
\]
This is a literal-count rule. It agrees with weakness throughout the degenerate case $L_{\mathfrak v}=2^{\mathfrak v}$. In a nondegenerate language, $Z$ assigns probability to impossible bundles. Conditioning on feasibility changes the distribution from independent vocabulary bits to the uniform distribution on $L_{\mathfrak v}$, and the continuation probability becomes proportional to weakness. The apparent conflict between maximum entropy and weakness is therefore a conflict between sample spaces. Maximum entropy over feasible embodied continuations recovers weakness. The unconstrained calculation always yields literal count, but it coincides with maximum entropy over embodied continuations only when syntax and feasibility coincide.

Exchangeability generalises the uniform calculation. Demands of equal size receive equal treatment, so larger buffers contain at least as much probability mass. Known unequal demand rates induce weighted weakness, as shown in Section~\ref{sec:theory}. Maximum entropy is one demand model within this family. Weakness is the structural quantity that remains after the demand law and embodied language have been declared.

\subsection{PAC-Bayes, function-space counts, and empirical comparison}

PAC-Bayes bounds expected loss of a posterior over predictors using empirical performance and a KL penalty relative to a prior \citep{mcallester1999pac,seeger2002pac,maurer2004note}. Nonvacuous neural bounds show that this framework can carry quantitative content even in heavily parameterised models \citep{dziugaite2017computing}. PAC-Bayes and weakness intersect in two ways here. Theorem~\ref{thm:pacweak} applies a standard PAC-Bayes-kl inequality to future-demand survival. Theorem~\ref{thm:iproject} then shows that restricting a reference measure on the unseen region to one policy buffer has minimum KL cost $\log[\mu(U)/\mu(B_\pi)]$. Greater buffer measure lowers that particular information cost.

This connection has a defined boundary. PAC-Bayes permits many priors, posteriors, hypothesis spaces, and losses. A parameter-space posterior built from local perturbations can express a flatness argument. A posterior over child-correct policies under demand-survival loss expresses a weakness argument. The present theorems establish the latter and do not reduce every PAC-Bayes analysis to weakness.

Activation-region counts and the joint bank occupy another neighbouring area. ReLU region counts measure how a network partitions input space and have been studied as measures of expressivity \citep{hanin2019regions}. Counting regions independently can ignore the fact that one set of classifier weights must satisfy all commitments at once. The statistic $J_H$ keeps that shared-head constraint by testing bundles of probe labels with one affine classifier. It is exact for the finite indexed bank and invariant under invertible affine changes of feature coordinates. Formal weakness additionally requires a common task, one policy version space, and a measure over the relevant extension. Proposition~\ref{prop:common} states the exact compatibility condition for the anchor-augmented policies. Until that condition and the shared task reading are established, $J_H$ is best read as matched external-demand survival in separate feature languages.

Comparative studies find that rankings induced by generalisation measures can change with architecture, optimiser, dataset, and experimental control \citep{jiang2020fantastic}. The present experiments therefore use the same networks for every measure, orient each score before comparison, correct the declared test families, and report paired differences between correlations. The random-label cohort adds a separate test motivated by the ability of neural networks to interpolate arbitrary labels \citep{zhang2017rethinking}. Its result is informative for the empirical bank. Uniform arbitrary-label freedom rises, yet its relation to held-out accuracy disappears. The theory consequently requires the task, language, and demand law to remain explicit. Weakness is universal as a selection rule within those declared objects. A single raw count across unrelated tasks is not.

\section{Discussion}\label{sec:discussion}

Stack Theory makes a universal claim about the selection rule. For a fixed task, embodied language, and demand law, generalisation is compatible future-demand mass. Exchangeable ignorance gives ordinary weakness, independent unequal rates give weighted weakness, and continuous languages use a reference measure. The numerical value is body- and law-relative, while weakest-correct selection persists across those choices.

The matched bank asks a finite neural question. Every network faces the same external probe-label demands. Its representation and empirical statement determine which demands one affine head can satisfy. Thus $J_H$ is exact for the indexed bank and invariant under invertible affine feature changes. The bridge is given by Proposition~\ref{prop:common}. Shared anchor labels create one compatible external statement. A direct semantic-Unagi test must also show that each network-specific task is a reading of one shared task template and that each empirical policy is correct for its reading.

On natural labels, $J_H$ predicts accuracy. Spectral-product margin has larger point estimates. Random-label training yields greater uniform-bank feasibility and a score-accuracy association centred near zero. The bank therefore measures affine-readout freedom under its declared probe law. The result supports a task-specific interpretation.

AFMaI fixes the penultimate representation, 32 anchors, 512 sampled statements, four equal-weight orders, and one probe-label law before test access. The cohorts use low-data multilayer perceptrons on two image datasets. Almost no demands containing 16 simultaneous probe labels are feasible, and linear-programming cost grows with feature dimension, class count, empirical-policy size, and bank size. The sequel calibrates a family of weakness measurements on separate cohorts and uses the frozen measurement rule for model selection or evolutionary intervention.

PAC-Bayes certifies expected future-demand survival under sampled demands. Information projection identifies the KL cost of concentrating the unseen-region measure on a buffer. Raw Hessian trace varies across parameterisations of one function. Curvature conditioned on coordinates, optimiser, and trajectory remains an informative diagnostic, alongside relative flatness, margin, and stability.

Raw Hessian flatness is a coordinate account. Generalisation is the survival of commitments under future demands.

\acks{This study was self-funded.}
\clearpage
\appendix

\section{Proofs}\label{app:proofs}

\begin{proof}[Proof of Theorem~\ref{thm:dom}]
The result follows from Definition~\ref{def:meas}. Correct policies satisfy
\[
\Ext{\pi}=O_\alpha\sqcup B_\pi,
\]
so finite $\mu(O_\alpha)$ makes $\mu$-weakness and buffer measure induce the same ordering. The nondecreasing survival function preserves that ordering. The strict part is given by strict increase on attained buffer measures.
\end{proof}

\begin{proof}[Equal-measure symmetry implies $\mu$-exchangeability]
Equal-measure symmetry makes $P(S\subseteq B)$ a function of $\mu(B)$. Let $\mu(A)\leq\mu(B)$. Atomlessness ensures that a measurable $B'\subseteq B$ exists with $\mu(B')=\mu(A)$. Therefore
\[
P(S\subseteq A)=P(S\subseteq B')\leq P(S\subseteq B),
\]
where the inequality follows from
\[
\{S\subseteq B'\}\subseteq\{S\subseteq B\}.
\]
Thus the function of measure is nondecreasing.
\end{proof}

\begin{proof}[Proof of Proposition~\ref{prop:common}]
For the first part, choose $y\in O_\alpha$. Correctness gives
\[
y\in\Ext{I_\alpha}\cap\Ext{\pi_i}
\]
for every $i$. Hence $\pi_i\subseteq y$ for every $i$, so $\bigcup_i\pi_i\subseteq y$. Since $y\in L_{\mathfrak v}$, the union is satisfiable and belongs to $L_{\mathfrak v}$.

For the converse, put $\bar\pi:=\bigcup_i\pi_i\in L_{\mathfrak v}$ and define
\[
I_\alpha:=\{\bar\pi\},
\qquad
O_\alpha:=\Ext{\bar\pi}.
\]
Then $O_\alpha\neq\emptyset$ because $\bar\pi\in\Ext{\bar\pi}$. Also $\pi_i\subseteq\bar\pi$, so $\Ext{\bar\pi}\subseteq\Ext{\pi_i}$. Therefore
\[
\Ext{I_\alpha}\cap\Ext{\pi_i}
=
\Ext{\bar\pi}\cap\Ext{\pi_i}
=
\Ext{\bar\pi}
=
O_\alpha
\]
for every $i$.

\end{proof}

\begin{proof}[Proof of Corollary~\ref{cor:anchor}]
In the external label language, assignments on different inputs have a common total extension, while two class assignments for one input are disjoint. The union of the anchor-augmented policies is therefore satisfiable exactly when every network assigns the same class at every anchor. Apply Proposition~\ref{prop:common}.
\end{proof}

\begin{proof}[Proof of Proposition~\ref{prop:bank}]
Draw an index $J$ uniformly from $\{1,\ldots,m\}$. Then
\[
P(\text{success})
=
\frac1m\sum_{j=1}^{m}
\1\{\pi_f\cup s_j^{(f)}\in L_{\mathfrak v_{\varphi_f}}\}
=
J_H(f).
\]
For the implemented design, write the bank average as the average of 128 path scores, each being the mean of the four fixed nested orders. The paths are independent and identically distributed under the declared path-sampling rule, so linearity of expectation shows that its expected value equals the expected path score.
\end{proof}

\begin{proof}[Proof of Theorem~\ref{thm:affine}]
For every input $x$,
\[
W'\varphi'_f(x)+b'
=
WA^{-1}(A\varphi_f(x)+a)+b-WA^{-1}a
=
W\varphi_f(x)+b.
\]
Every class-score difference is preserved. The head map is one-to-one and onto because $A$ is invertible. Every joint statement therefore has the same feasibility status.
\end{proof}

\section{PAC-Bayes Proofs}\label{app:pacproof}

\begin{proof}[Proof of Theorem~\ref{thm:pacweak}]
Apply PAC-Bayes-kl to
\[
\ell(\pi,S)=1-\1\{S\subseteq B_\pi\}.
\]
The resulting inequality is
\[
\bkl\!\left(1-\widehat G_n(Q)\middle\Vert1-G(Q,P)\right)
\leq
\frac{\operatorname{KL}(Q\Vert Q_0)+\log(2\sqrt n/\delta)}{n}.
\]
Use $\bkl(1-a\Vert1-b)=\bkl(a\Vert b)$.
\end{proof}

\begin{proof}[Proof of Theorem~\ref{thm:iproject}]
Here $dQ/d\nu$ denotes the density of measure $Q$ relative to measure $\nu$. On $B_\pi$,
\[
\frac{d\mu(\,\cdot\,\mid B_\pi)}
     {d\mu(\,\cdot\,\mid U)}
=
\frac{\mu(U)}{\mu(B_\pi)}.
\]
For every $Q$ supported on $B_\pi$,
\[
\log\frac{dQ}{d\mu(\,\cdot\,\mid U)}
=
\log\frac{dQ}{d\mu(\,\cdot\,\mid B_\pi)}
+
\log\frac{\mu(U)}{\mu(B_\pi)}.
\]
The identity is given by integrating with respect to $Q$. The infimum and equality condition follow from nonnegativity of KL.
\end{proof}

\begin{theorem}[Completion-level survival certificate]\label{thm:compcert}
Assume $0<\mu(B_\pi)\leq\mu(U)<\infty$, $n\geq2$, and $\delta\in(0,1)$. Let $S_1,\ldots,S_n$ be independent future-demand sets drawn from $P$. Draw completions from $\mu(\,\cdot\,\mid B_\pi)$. For each sampled demand $S_i$, score completion $y$ by $\1\{S_i\subseteq\Ext{y}\cap U\}$. Write
\[
\widehat C_n(\pi)
:=
\frac1n\sum_{i=1}^{n}
\int_{B_\pi}\1\{S_i\subseteq\Ext{y}\cap U\}\,d\mu(y\mid B_\pi)
\]
and
\[
C(\pi,P)
:=
\int_{B_\pi}P(S\subseteq\Ext{y}\cap U)\,d\mu(y\mid B_\pi).
\]
With probability at least $1-\delta$,
\[
\bkl\!\left(\widehat C_n(\pi)\middle\Vert C(\pi,P)\right)
\leq
\frac{\log[\mu(U)/\mu(B_\pi)]+\log(2\sqrt n/\delta)}{n}.
\]
Moreover $C(\pi,P)\leq G(\pi,P)$.
\end{theorem}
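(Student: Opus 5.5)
The plan is to read Theorem~\ref{thm:compcert} as a special case of the generic PAC-Bayes-kl inequality behind Theorem~\ref{thm:pacweak}. The candidate set is changed from policies to completions in the unseen region, and the posterior is chosen so that its KL cost is exactly the quantity computed in Theorem~\ref{thm:iproject}.

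\textbf{Candidates, loss and prior.} Take the candidate space to be $U$ with the $\sigma$-algebra inherited from $\mathcal A_L$. Score candidate $y$ against a demand $S$ by the bounded loss
\[
\ell(y,S)=1-\1\{S\subseteq\Ext{y}\cap U\}.
\]
For this loss to be admissible I need joint measurability in $(y,S)$, which I take to be covered by the paper's standing assumption that every containment event used is measurable. Take the prior $Q_0:=\mu(\,\cdot\,\mid U)$. It depends only on $\mu$ and the fixed task, so it is chosen before and independently of $S_1,\ldots,S_n$. This is exactly the independence condition noted after Theorem~\ref{thm:pacweak}.

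\textbf{Applying PAC-Bayes-kl.} Apply PAC-Bayes-kl in its ``simultaneously for every posterior'' form, as in the proof of Theorem~\ref{thm:pacweak}, and use the symmetry $\bkl(1-a\Vert 1-b)=\bkl(a\Vert b)$. Then specialise to the data-independent posterior $Q:=\mu(\,\cdot\,\mid B_\pi)$; since $Q$ is data-independent, a single application also suffices. This posterior is absolutely continuous with respect to $Q_0$ and satisfies $Q(B_\pi)=1$. By Theorem~\ref{thm:iproject}, with the first KL term vanishing at the minimiser,
\[
\operatorname{KL}(Q\Vert Q_0)=\log\frac{\mu(U)}{\mu(B_\pi)}.
\]
By Fubini (the integrand is bounded and measurable), the empirical and expected survival of $Q$ are exactly $\widehat C_n(\pi)$ and $C(\pi,P)$. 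Substituting these gives the stated bound.

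\textbf{The inequality $C(\pi,P)\leq G(\pi,P)$.} This is pointwise monotonicity of events. For $y\in B_\pi$ we have $\pi\subseteq y$, so $\Ext{y}\subseteq\Ext{\pi}$ and hence $\Ext{y}\cap U\subseteq B_\pi$. Therefore
\[
\{S\subseteq\Ext{y}\cap U\}\subseteq\{S\subseteq B_\pi\},
\]
so $P(S\subseteq\Ext{y}\cap U)\leq G(\pi,P)$ for every such $y$. Integrating against the probability measure $\mu(\,\cdot\,\mid B_\pi)$ gives $C(\pi,P)\leq G(\pi,P)$.

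\textbf{Main obstacle.} The only delicate step is the measure-theoretic bookkeeping. One must confirm that PAC-Bayes-kl, with the Maurer constant $2\sqrt n$, holds on a general measurable candidate space rather than a finite one. This follows from the Donsker--Varadhan change of measure, which needs only a bounded measurable loss and $Q\ll Q_0$. One must also confirm that the map $(y,S)\mapsto\1\{S\subseteq\Ext{y}\cap U\}$ is jointly measurable so that Fubini applies. I would state the latter explicitly as an instance of the paper's containment-measurability assumption rather than try to derive it.
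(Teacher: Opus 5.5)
Your proposal is correct and matches the paper's proof step for step. Both apply PAC-Bayes-kl on candidate space $U$ with prior $\mu(\,\cdot\,\mid U)$, posterior $\mu(\,\cdot\,\mid B_\pi)$ and the completion loss, read the KL cost $\log[\mu(U)/\mu(B_\pi)]$ off Theorem~\ref{thm:iproject}, and obtain $C(\pi,P)\leq G(\pi,P)$ from $\Ext{y}\subseteq\Ext{\pi}$ for $y\in B_\pi$; your remarks on joint measurability and the general candidate space only make explicit what the paper's short proof leaves implicit.
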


\begin{proof}
Apply PAC-Bayes-kl on candidate space $U$ with prior $\mu(\,\cdot\,\mid U)$, posterior $\mu(\,\cdot\,\mid B_\pi)$, and the stated completion loss. The KL term is given by Theorem~\ref{thm:iproject}. If $y\in B_\pi$, then $y\in\Ext{\pi}$ and $\Ext{y}\subseteq\Ext{\pi}$. Completion survival therefore implies policy survival.
\end{proof}

\begin{proposition}[Pushforward invariance]\label{prop:pacinv}
Let $U'$ be another unseen region with measure $\mu'$, policy $\pi'$, and buffer $B_{\pi'}$. Let $h:U\to U'$ be a reversible relabelling for which both $h$ and $h^{-1}$ preserve measurable sets. The notation $h_\#\mu=\mu'$ means that relabelling transports measure $\mu$ to $\mu'$. Let $P':=h_\#P$ be the law of the relabelled demand set $h(S)$. Assume also that $h(B_\pi)=B_{\pi'}$. Then
\[
h_\#\mu(\,\cdot\,\mid U)=\mu'(\,\cdot\,\mid U'),
\qquad
h_\#\mu(\,\cdot\,\mid B_\pi)=\mu'(\,\cdot\,\mid B_{\pi'}),
\]
and the buffer measure and information-projection KL are unchanged. Policy survival satisfies $G(\pi,P)=G(\pi',P')$. Completion-level survival is also invariant when
\[
h(\Ext{y}\cap U)=\operatorname{Ext}'(h(y))\cap U'
\]
for every $y$ except a set of zero $\mu(\,\cdot\,\mid B_\pi)$ measure, where $\operatorname{Ext}'$ denotes extension in the target embodied language.
\end{proposition}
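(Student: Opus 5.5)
The plan is to push every object through $h$ and check each claim one at a time, using only the change-of-variables identity $\int g\,d(h_\#\nu)=\int g\circ h\,d\nu$ and bijectivity of $h$.

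\textbf{Step 1: conditional measures.} For measurable $E\subseteq U'$, we have $h_\#\mu(E\mid U)=\mu(h^{-1}(E)\cap U)/\mu(U)$. Because $h^{-1}(E)\subseteq U$ and $h_\#\mu=\mu'$, this equals $\mu'(E)/\mu'(U')$, with $\mu'(U')=\mu(h^{-1}(U'))=\mu(U)$. For the buffer, use $h(B_\pi)=B_{\pi'}$, so $h^{-1}(E)\cap B_\pi=h^{-1}(E\cap B_{\pi'})$ by bijectivity. The same computation then gives $h_\#\mu(\cdot\mid B_\pi)=\mu'(\cdot\mid B_{\pi'})$, and also $\mu'(B_{\pi'})=\mu(B_\pi)$. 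That settles invariance of buffer measure.

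\textbf{Step 2: the KL term.} By Theorem~\ref{thm:iproject}, the information-projection cost is $\log[\mu(U)/\mu(B_\pi)]$. Both numerator and denominator are preserved by Step 1, so the cost is unchanged. Alternatively, KL is invariant under a measurable bijection applied to both arguments, since the Radon--Nikodym density transports as $d(h_\#Q)/d(h_\#\nu)=(dQ/d\nu)\circ h^{-1}$.

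\textbf{Step 3: policy survival.} By the definition of $P'$,
\[
G(\pi',P')=P'(S'\subseteq B_{\pi'})=P(h(S)\subseteq h(B_\pi)).
\]
Injectivity of $h$ gives $h(S)\subseteq h(B_\pi)$ if and only if $S\subseteq B_\pi$, so this equals $G(\pi,P)$. Measurability of the event is inherited because $h$ and $h^{-1}$ preserve measurable sets.

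\textbf{Step 4: completion-level survival.} Start from
\[
C(\pi',P')=\int_{B_{\pi'}}P'\bigl(S'\subseteq\operatorname{Ext}'(y')\cap U'\bigr)\,d\mu'(y'\mid B_{\pi'}).
\]
By Step 1 the measure $\mu'(\cdot\mid B_{\pi'})$ is $h_\#\mu(\cdot\mid B_\pi)$, so change of variables rewrites the integral over $y\in B_\pi$ with integrand $P\bigl(h(S)\subseteq\operatorname{Ext}'(h(y))\cap U'\bigr)$. The hypothesis lets us replace $\operatorname{Ext}'(h(y))\cap U'$ by $h(\Ext{y}\cap U)$ for $\mu(\cdot\mid B_\pi)$-almost every $y$. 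Injectivity then turns the event into $S\subseteq\Ext{y}\cap U$, which recovers $C(\pi,P)$. The empirical version $\widehat C_n$ transforms the same way when the samples are relabelled as $h(S_i)$.

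\textbf{Main obstacle.} The only delicate point is measurability bookkeeping: the integrands $y\mapsto P(S\subseteq\Ext{y}\cap U)$ must be measurable, and the exceptional null set must map to a null set. The first is already assumed in Definition~\ref{def:meas}. The second follows because $h_\#$ preserves null sets, which is Step 1 applied with $E$ a null set.
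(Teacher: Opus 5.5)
Your proposal is correct and follows the paper's proof, which states in four sentences that conditioning commutes with the measure-preserving bijection, that KL is invariant when both measures are relabelled, that injective relabelling preserves containment, and that the extension hypothesis handles the completion-level event; your Steps 1--4 supply the computations behind each of those sentences. One small simplification: the exceptional null set never needs to be transported, because after the change of variables you already integrate against $\mu(\cdot\mid B_\pi)$ on the source side.
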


\begin{proof}
Conditioning a measure on a region commutes with a reversible measure-preserving relabelling. KL is invariant when both measures pass through the same relabelling. Relabelling preserves set containment. The additional extension condition is exactly what is needed for the completion-level event.
\end{proof}
\section{Linear Feasibility}\label{app:lp}

For feature vector $z\in\R^d$ and required class $y$, let $W_c$ denote row $c$ of the affine-head weight matrix. The affine-head condition is
\[
(W_y-W_c)^\top z+(b_y-b_c)\geq1
\qquad\text{for every }c\in\mathcal C\setminus\{y\}.
\]
Writing every entry of $W$ and $b$ as one parameter vector makes each inequality linear in $K(d+1)$ variables. One LP is formed by combining inequalities from the empirical policy and sampled commitments. For the submitted architecture, $K(d+1)=90$. The largest statement has $(250+32+16)(10-1)=2682$ inequalities.

The bank uses nested statements and permits early termination. The first contradictory prefix certifies every longer prefix on the same path, so the implementation reuses that certificate.

\section{Protocol}\label{app:protocol}

A fixed training-partition permutation allocates 250 child inputs, 32 anchors, and 128 free probes. The random-label cohort uses the same input indices as natural-label MNIST. Training uses stochastic gradient descent (SGD) with batches of 64 child examples. The learning rate, which sets update size, begins at $0.1$ and is multiplied by $0.3$ after epochs 500, 1000, and 1500. Momentum is $0.9$, so recent update directions are averaged, and weight decay is $0$, so no parameter-shrinkage penalty is applied. One epoch is one pass through the child set. A network must interpolate the child set, meaning classify every child input correctly, with minimum class-score gap at least $10^{-3}$ within 2000 epochs. The protocol preserves the original seed list and reports every failed seed.

The bank seed is 314159. Each of 128 paths samples 16 distinct free probes and 16 labels. Prefixes at orders $2,4,8,16$ yield 512 statements. The same bank is used for every network and cohort. The LP margin is one. Solver-unknown outcomes are recorded through lower and upper feasibility bounds. A cohort stops when any network exceeds one percent unknown statements.

Raw Hessian trace uses 16 Hutchinson samples, a random-vector estimator of matrix trace, on child cross-entropy loss. Cross-entropy penalises low probability assigned to the observed class. Final-layer relative flatness uses Definition 3 of \citet{petzka2021relative}. Let $w_s$ be row $s$ of the final-layer weight matrix, and let $H_{s,s'}$ be the Hessian block formed by second derivatives of child loss with respect to rows $s$ and $s'$. The statistic is
\[
\kappa_{\mathrm{Tr}}=\sum_{s,s'}\langle w_s,w_{s'}\rangle\operatorname{Tr}(H_{s,s'}),
\]
after rescaling every feature coordinate with nonzero sample variance so that its variance equals one. The spectral-product statistic is the tenth-percentile child score margin divided by the product of layer spectral norms. It uses the leading margin-over-spectral-product factor from \citet{bartlett2017spectral}. The full bound also contains a layerwise correction factor. Parameter norm is the Euclidean norm of every parameter.

Each cohort first writes 100 model-side rows containing training-side quantities. The updated notebooks retain each anchor-prediction vector and its SHA-256 digest. A SHA-256 digital fingerprint of all model-side rows is stored in \texttt{lock.json}. The next stage downloads the official test partition and adds accuracy. The reducer recomputes the digest and rejects changed rows.

Permutation $p$ values use 50,000 draws. Confidence intervals use 10,000 network bootstrap samples. Before correlation, raw Hessian trace, relative flatness, and weight norm are negated. The two natural joint tests form the primary Holm family. Eight baseline tests form another Holm family. Four within-dataset correlation differences use Bonferroni $98.75\%$ intervals. Odd and even paths produce split-bank estimates. The random-label association and paired natural-minus-random mean difference form a third Holm family. When raw rows are present, the reducer also reports a secondary partial-rank analysis. It converts $J_H$, accuracy, and spectral-product margin to ranks, removes the linear rank effect of margin from the first two, and correlates the remaining rank residuals.

\section{Generated Diagnostics}\label{app:diag}
\begin{table}[h]
\centering
\small
\caption{Paired differences between the joint-measure correlation and each oriented baseline correlation. Intervals are Bonferroni $98.75\%$ intervals within each dataset.}
\label{tab:cmpres}
\begin{tabular}{@{}llcc@{}}
\toprule
Dataset & Baseline & $\rho_J-\rho_b$ & $98.75\%$ CI \\
\midrule
MNIST & Raw Hessian trace & $+0.198$ & [$-0.149$, $+0.553$] \\
MNIST & Final-layer relative flatness & $+0.122$ & [$-0.201$, $+0.448$] \\
MNIST & Spectral-product margin & $-0.215$ & [$-0.467$, $+0.037$] \\
MNIST & Weight norm & $-0.108$ & [$-0.422$, $+0.194$] \\
Fashion-MNIST & Raw Hessian trace & $+0.412$ & [$+0.106$, $+0.701$] \\
Fashion-MNIST & Final-layer relative flatness & $+0.507$ & [$+0.178$, $+0.796$] \\
Fashion-MNIST & Spectral-product margin & $-0.077$ & [$-0.279$, $+0.126$] \\
Fashion-MNIST & Weight norm & $+0.107$ & [$-0.166$, $+0.384$] \\
\bottomrule
\end{tabular}
\end{table}

\begin{table}[h]
\centering
\small
\caption{Estimator diagnostics. The LP columns report correlations at the lower and upper feasibility bounds.}
\label{tab:diagres}
\begin{tabular}{@{}lcccc@{}}
\toprule
Dataset & $\rho_{\rm low}$ & $\rho_{\rm high}$ & Half-bank $\rho$ & Spearman--Brown \\
\midrule
MNIST & $+0.290$ & $+0.290$ & $+0.887$ & $0.940$ \\
Fashion-MNIST & $+0.468$ & $+0.468$ & $+0.915$ & $0.956$ \\
\bottomrule
\end{tabular}
\end{table}

\begin{table}[h]
\centering
\small
\caption{Mean feasible fraction by commitment order. Rates near zero or one show that an order distinguishes few networks.}
\label{tab:ordres}
\begin{tabular}{@{}lcccc@{}}
\toprule
Dataset & Order 2 & Order 4 & Order 8 & Order 16 \\
\midrule
MNIST & $0.671$ & $0.402$ & $0.073$ & $0.000$ \\
Fashion-MNIST & $0.741$ & $0.487$ & $0.133$ & $0.000$ \\
Random labels & $0.843$ & $0.628$ & $0.206$ & $0.003$ \\
\bottomrule
\end{tabular}
\end{table}

The order-16 component is nearly at the feasibility floor. Removing or reweighting it after seeing outcomes would change the declared measure, so the primary analysis retains it. Future calibration should take place on a separate cohort before the instrument is frozen.

\section{Rescaling Values}\label{app:rescale}

For $T_\beta$, the first hidden layer's weights and biases are multiplied by $\beta$ and the following layer's weights are divided by $\beta$. For $T_\gamma$, the second hidden layer's weights and biases are multiplied by $\gamma$ and the output layer's weights are divided by $\gamma$. Positive ReLU homogeneity makes both maps function preserving.

\begin{table}[h]
\centering
\caption{One network trained on 6000 points. Function-preserving rescaling changes raw Hessian trace while preserving every prediction.}
\label{tab:reparam}
\begin{tabular}{@{}llrr@{}}
\toprule
Map & Value & Test accuracy & Hessian trace \\
\midrule
$T_\beta$ & 1 & 0.9392 & 66.1 \\
$T_\beta$ & 2 & 0.9392 & 81.3 \\
$T_\beta$ & 5 & 0.9392 & 388.8 \\
$T_\beta$ & 10 & 0.9392 & 1607.1 \\
$T_\beta$ & 20 & 0.9392 & 6525.9 \\
$T_\gamma$ & 1 & 0.9392 & 62.9 \\
$T_\gamma$ & 5 & 0.9392 & 255.2 \\
$T_\gamma$ & 20 & 0.9392 & 3378.9 \\
\bottomrule
\end{tabular}
\end{table}

The two identity rows differ slightly because Hessian trace is estimated from random vectors. Across twelve retained networks, the largest ratio was $99\times$. The experiment archive contains the notebooks and recorded summary values. Checkpoint files remain in the run directory on Google Drive.

\section{Reproducibility}\label{app:repro}

The supplement contains three self-contained cohort notebooks, one reducer, an anchor-audit notebook, mathematical checks, requirements, protocol documents, generated tables, and aggregate results. Each cohort notebook requests an L4 graphics processing unit (GPU) and high-memory Colab runtime, mounts Google Drive, saves one model per seed, checkpoints LP status after every 32 new solves, and resumes by seed. The updated notebooks retain anchor predictions in future run rows. The audit notebook recovers them from saved checkpoints and existing LP records. The reducer checks hashes, splits, nested-bank structure, statistical families, LP bounds, and the PAC-Bayes path certificate.

The submitted archive includes the notebooks and aggregate results. The checkpoint directory and raw per-network rows remain in Google Drive. Recomputing row-level correlations requires restoring those files or rerunning the notebooks. The completed cohort awaits the common-anchor audit.

The source package contains the manuscript, JMLR style file, master bibliography, and PDF figures. Dataset files enter through official downloads during notebook execution. Test labels enter in the final evaluation cell. Dataset licences and attribution are documented in the supplement.

\bibliography{master_bibliography}
\end{document}